\newtheorem{assumption}{Assumption}
\newtheorem{theorem}{Theorem}
\newtheorem{problem}{Problem}
\newtheorem{lemma}{Lemma}
\newtheorem{remark}{Remark}
\newtheorem{example}{Example}
\crefname{example}{Example}{Examples}
\crefname{section}{Sec.}{Secs.}
\Crefname{section}{Section}{Sections}
\Crefname{table}{Table}{Tables}
\crefname{table}{Table}{Tabs.}
\crefname{figure}{Fig.}{Figs.}
\crefname{algorithm}{Algorithm}{Algorithms}
\crefname{remark}{Remark}{Remarks}
\crefname{theorem}{Theorem}{Theorems}
\crefname{lemma}{Lemma}{Lemmas}
\newcommand{\Real}{\mathbb{R}}
\newcommand{\Natural}{\mathbb{N}}
\DeclareMathOperator*{\argmin}{arg\,min}
\providecommand{\inner}[2]{\left \langle #1, #2 \right \rangle}
\providecommand{\norm}[1]{\left\|#1\right\|}
\providecommand{\twonorm}[1]{\left\|#1\right\|_2}
\DeclareMathOperator{\Diag}{Diag}
\DeclareMathOperator{\rgrad}{grad}
\DeclareMathOperator{\Retr}{Retr}
\providecommand{\transporter}[2]{T^{#2\leftarrow#1}}
\DeclareMathOperator{\SE}{SE}
\newcommand{\Algorithm}{{LARPG}\xspace}
\providecommand{\dataset}[1]{{\small \textsf{#1}}\xspace}
\providecommand{\software}[1]{{#1}\xspace}
\newcommand{\etal}{\emph{et~al.}\xspace}
\newcommand{\eg}{\emph{e.g.}\xspace}
\newcommand{\ie}{\emph{i.e.}\xspace}
\newcommand{\cf}{\emph{c.f.}\xspace}
\newcommand{\Ycal}{\mathcal{Y}}
\newcommand{\Xcal}{\mathcal{X}}
\newcommand{\bmat}{\begin{bmatrix}}
\newcommand{\emat}{\end{bmatrix}}	
\newcommand{\fhat}{\widehat{f}}
\newcommand{\hhat}{\widehat{h}}
\newcommand{\mhat}{\widehat{m}}
\newcommand{\what}{\widehat{w}}
\newcommand{\wtilde}{\widetilde{w}}
\newcommand{\ustar}{u^\star}
\newcommand{\vstar}{v^\star}
\newcommand{\sigmabar}{\sigma_p}
\newcommand{\myParagraph}[1]{{\bf #1.}\xspace}
\providecommand{\optional}[1]{{#1}}
\providecommand{\remove}[1]{{}}
\providecommand{\revise}[1]{{#1}}
\title{\LARGE \bf
Distributed Riemannian Optimization with Lazy Communication\\%
for Collaborative Geometric Estimation
}
\author{Yulun Tian$^{1}$, Amrit Singh Bedi$^{2}$, Alec Koppel$^{3}$, Miguel Calvo-Fullana$^{1}$, \\ David M. Rosen$^{4}$, and Jonathan P. How$^{1}$
\thanks{*This work was supported in part by ARL DCIST under Cooperative Agreement Number W911NF-17-2-0181, and in part by ONR under BRC award N000141712072.}
\thanks{$^{1}$Y. Tian, M. Calvo-Fullana, and J. P. How are with the Department of Aeronautics and Astronautics, Massachusetts Institute of Technology, 77 Massachusetts Ave, Cambridge, MA 02139, USA {\tt\small \{yulun, cfullana, jhow\}@mit.edu}}%
\thanks{$^{2}${A. S. Bedi is with Institute of Systems Research, \revise{University Of Maryland}, College Park, MD, USA.}
        {\tt\small amritbd@umd.edu}}%
\thanks{$^{3}${A. Koppel is with Supply Chain Optimization Technologies,
Amazon. 320 108th Avenue NE, Bellevue, WA 98004, USA.}
    {\tt\small aekoppel@amazon.com}}%
\thanks{$^{4}${D. M. Rosen is with the Departments of Electrical and Computer Engineering and Mathematics, Northeastern University, 360 Huntington Ave, Boston, MA 02115, USA.}
	{\tt\small d.rosen@northeastern.edu}}%
}
\begin{document}

\maketitle

\begin{abstract}
	We present the first distributed optimization algorithm with \emph{lazy communication} for collaborative geometric estimation, the backbone of modern collaborative simultaneous localization and mapping (SLAM) and structure-from-motion (SfM) applications. 
	Our method allows agents to cooperatively reconstruct a shared geometric model on a central server by fusing individual observations, 
	but \emph{without} the need to transmit potentially sensitive information about the agents themselves (such as their locations).
	Furthermore, to alleviate the burden of communication during iterative optimization, we design a set of \emph{communication triggering conditions} that enable agents to selectively upload \revise{a targeted subset of} local information that \revise{is} useful to global optimization. 
	Our approach thus achieves significant communication reduction with minimal impact on optimization performance.
	As our main theoretical contribution, we prove that our method converges to first-order critical points with a \revise{global} sublinear convergence rate.
	Numerical evaluations on bundle adjustment problems from collaborative SLAM and SfM datasets show that our method performs competitively against existing distributed techniques, while achieving up to 78\% total communication reduction.
\end{abstract}

\bstctlcite{bst_control}

\section{Introduction}
Geometric estimation, which refers to the task of estimating geometric models (\eg, poses and 3D structure) from multiple views, is a fundamental technology that underlies important robotic applications such as simultaneous localization and mapping (SLAM) and Structure-from-Motion (SfM). 
For emerging applications in multi-robot systems 
and mixed reality,
\emph{collaborative} geometric estimation enables multiple agents to build and use a shared geometric model (\eg, a large-scale 3D map). 
At the core of this process is a large-scale optimization that fuses measurements collected by all agents to produce a global geometric model. 

Existing multi-agent systems often offload the aforementioned global optimization, such as bundle adjustment (BA), to a central server or base station \cite{Forster13CSLAM,Schmuck18CCM,Ebadi20LAMP,Schmuck21covins}. 
However, as the number of robots or mission time increases, 
centralized optimization suffers from increasing problem size that eventually makes the server a computational bottleneck.
Furthermore, centralized optimization usually requires agents to communicate \emph{private} data (\eg, images or locations) to the server, which compromises privacy requirements in applications such as autonomous driving and mixed reality.

\emph{Distributed optimization} provides a promising solution that addresses both scalability and privacy concerns by leveraging the {local} computational power of the agents.
In a distributed architecture, agents collaboratively solve the underlying optimization problem by coordinating with the server or with each other directly.
However, distributed systems often require more frequent communication than their centralized counterparts due to the iterative nature of most optimization algorithms.
Furthermore, the amount of data communicated at each iteration often grows proportionally with the dimension of the shared model.
For large models, this type of \emph{iterative communication} can result in long delays under real-world communication networks.
Consequently, existing distributed systems often use simpler formulations that require less communication (\eg, pose graph optimization  \cite{Cieslewski18DataEfficient,Lajoie20DOOR,Tian21KimeraMulti})
or operate on computer clusters with high-performance communication \cite{Zhang17DistBA}.

In this work, we develop a \emph{communication-efficient} algorithm for collaborative geometric estimation, which significantly reduces the burden of communication when performing distributed optimization on high-dimensional problems. 
The core idea behind our approach is \emph{lazy communication}: 
instead of uploading all information at every iteration, 
agents selectively upload parts of their local information that have changed significantly from the past. 
While the main idea is intuitive, incorporating lazy communication in our applications raises a series of technical questions ranging from algorithm design to theoretical analysis of convergence that we address in this work.

\myParagraph{Contributions}
We propose a \emph{communication-efficient} distributed Riemannian optimization algorithm for collaborative geometric estimation. 
To tackle the numerical poor conditioning associated with most real-world problems,
we design a distributed method that performs \emph{approximate second-order} updates 
while simultaneously protecting the privacy of participating agents.
Furthermore, we augment our basic method with \emph{lazy communication}, which enables agents to only transmit the parts of their local information that satisfy certain \revise{\emph{communication triggering conditions}}, and hence significantly reduces overall communication.
We prove that our final algorithm converges globally to first-order critical points \revise{with a global sublinear rate}.
Compared to related \revise{works} that study lazy communication in distributed first-order methods (\eg, \cite{Chen18LAG}), our algorithm design and convergence analysis 
are significantly different and account for the employed second-order updates, the treatment of non-convex manifold constraints, among other details (see Remark~\ref{remark:novelty}).
We perform extensive evaluations on large-scale BA problems in collaborative SLAM and SfM scenarios, which are central to emerging multi-robot navigation and mixed reality applications.
Results show that our algorithm achieves competitive performance compared to other state-of-the-art methods under the same communication architecture, while achieving up to 78\% total communication reduction.

\subsection*{Preliminaries on Riemannian Optimization}
For a smooth Riemannian manifold $\Xcal$, we denote the {tangent space} at $x \in \Xcal$ as $T_x \Xcal$.
For two tangent vectors $u_1, u_2 \in T_x \Xcal$, the inner product is denoted as $\inner{u_1}{u_2}_x$, and the corresponding norm is $\norm{u}_x \triangleq \sqrt{\inner{u}{u}_x}$.
In the rest of the paper, we drop the subscript $x$ as it will be clear from context.
Let $M: T_{x_1} \Xcal \to T_{x_2} \Xcal$ be a linear map between two tangent spaces.
With a slight abuse of notation, we also use $M$ to denote the matrix representation of this linear map under chosen bases of $T_{x_1} \Xcal$ and $T_{x_2} \Xcal$. 
For $u \in T_{x_1} \Xcal$, $Mu \in T_{x_2} \Xcal$ denotes the result of applying $M$ on $u$.
Further, $\norm{M}$ denotes the operator norm of $M$ with respect to the Riemannian metric. 
When $M: T_{x} \Xcal \to T_x \Xcal$ maps a tangent space to itself and is
symmetric and positive definite, we define its associated inner product
as $\inner{u_1}{u_2}_M \triangleq \inner{u_1}{Mu_2}$ with the corresponding norm 
$\norm{u}_M \triangleq \sqrt{\inner{u}{u}_M}$. 
A \emph{retraction} at $x$ is a smooth map $\Retr_x: T_x \Xcal \to \Xcal$ that preserves the first-order geometry of $\Xcal$.
For a scalar function defined on the manifold $f: \Xcal \to \Real$, we use $\rgrad f(x) \in T_x \Xcal$ to denote its Riemannian gradient at $x \in \Xcal$.
Intuitively, $\rgrad f(x)$ provides the direction of steepest ascent in the tangent space at $x$. 
The reader is referred to \cite{Absil2009Book,Boumal20Book} for a more rigorous treatment of Riemannian optimization.

\section{Related Work}

Centralized geometric estimation is a well studied subject with off-the-shelf high-performance solvers available \cite{gtsam,g2o,ceres-solver}.
Recently, Zhang~\etal~\cite{Zhang21MRiSAM2} develop a centralized incremental solver for multi-robot SLAM.
Meanwhile, distributed methods have gained increasing attention; 
see \cite{Lajoie21Survey} for a recent survey.
Cunningham~\etal develop the pioneering work of DDF-SAM~\cite{Cunningham10DDFSAM,Cunningham13DDFSAM2}, 
where agents use Gaussian elimination to exchange marginals over commonly observed landmarks.
\optional{Later work build on similar idea and propose extensions such as consistent sparsification \cite{Paull15Sparsification} and real-time operation on devices with limited resources \cite{Zhang21ServerClient}.}
Our proposed method employs a similar elimination technique, and furthermore supports lazy communication to achieve significant communication reduction. 
\optional{Another recent line of research investigates distributed pose graph SLAM~\cite{Tron2014CameraNetwork,Choudhary17IJRR,Tian21DC2PGO,Tian20ASAPP,Fan20Majorization}. However, in most cases these methods only estimate robots' poses and not the global map.}
Related work in computer vision considers solving large-scale SfM using distributed architectures. 
Earlier work proposes to use distributed conjugate gradients for multi-core BA \cite{Wu11MulticoreBA}.
More recently, researchers have proposed alternative algorithms based on Douglas-Rachford splitting~\cite{Eriksson16ConsensusBA} or alternating direction method of multipliers (ADMM)~\cite{Zhang17DistBA}.

Communication efficiency has been a central theme in distributed optimization.
\optional{Recently, this topic has gained increasing attention due to the success of federated learning \cite{FL21Advances}. }
Multiple techniques to achieve communication efficiency have been proposed, including the use of quantization \cite{suresh2017distributed} and distributed second-order methods \cite{shamir2014communication}. 
In this work, we explore an alternative strategy based on \emph{lazy} or \emph{event-triggered} communication, which has demonstrated impressive results \cite{Chen18LAG}.
\optional{The same idea has found successful applications in related areas such as decentralized control \cite{Ding18EventTrigger}.}
We develop lazy communication schemes for collaborative geometric estimation, 
which requires substantial innovations in algorithm design and theoretical analysis compared to existing work \cite{Chen18LAG}; see Remark~\ref{remark:novelty}.

\section{Collaborative Geometric Estimation}
\label{sec:problem_formulation}
We consider a scenario where $N$ agents navigate in a common environment, 
and seek to collaboratively estimate a shared geometric model $y \in \Ycal$.
For this purpose, agents communicate with a central server,
who is responsible for coordinating updates across the team. 
In practice, the server can be a lead agent or a base station.
Motivated by most real-world applications, we assume that the shared model $y$ consists of $m$ smaller elements $y = \{y_1, \hdots, y_m\}$, where each element $y_l \in \Ycal_l$ corresponds to a single geometric primitive.
For instance, when $y$ represents a point cloud map, each $y_l$ corresponds to a single 3D point. 
During navigation, each agent $i \in [N] \triangleq \{1, \hdots, N\}$ observes a subset of the shared model.
In addition, agent $i$ also maintains \emph{private} variable $x_i \in \Xcal_i$, which can contain sensitive information such as the trajectory of this agent.

In this work, we focus on solving the maximum likelihood estimation (MLE) problem in the multi-agent scenario described above. 
The MLE formulation is very general and encompasses a wide range of robot perception problems \cite{Dellaert17FactorGraph}.
Under the MLE formulation, local measurements collected by agent $i$ induce a local cost \revise{function}\footnote{For clarity of presentation, we assume that $f_i$ depends on the entire shared model $y$. See \cref{remark:implementation} for discussions of the general case. } $f_i: \Xcal_i \times \Ycal \to \Real$ that is usually \emph{non-convex}.
Under the standard assumption that agents' measurements are corrupted by independent noise, the global MLE problem takes the following form.

\begin{problem}[Collaborative Geometric Estimation]
\begin{subequations}
	\begin{align}
	\underset{x_i, y}{\min}& \quad \;f(x,y) \triangleq \sum_{i=1}^N f_i(x_i, y),
	\label{eq:problem_formulation_objective} \\
	\;
	\textrm{s.t.}& 
	\quad x_i \in \Xcal_i, \; \forall i \in [N],
	\;\; y \in \Ycal.
	\end{align}
	\label{eq:problem_formulation}
\end{subequations}
\label{problem_formulation}
\end{problem}
\vspace{-0.5cm}

In \eqref{eq:problem_formulation}, we use $x \in \Xcal$ to denote the concatenation of all private variables $x_i, \; i \in [N]$. 
Next, we present several motivational examples related to multi-robot navigation.

\begin{example}[Collaborative Bundle Adjustment]
\normalfont
	Bundle adjustment (BA) \cite{Triggs00BA} is a crucial building block of 
	modern visual SLAM and SfM systems.
	In collaborative BA, agents jointly estimate a global map using local measurements collected by monocular cameras.
	Assuming known camera intrinsics, the private variable $x_i$ contains camera poses of agent $i$, \ie, 
	$x_i = \{T^{(i)}_{1}, \hdots, T^{(i)}_{n_i}\} \in \SE(3)^{n_i}$. 
	The shared variable $y$ consists of points in the global map, \ie,
	$y = \{y_1, y_2, \hdots, y_m\} \in \Real^{3 \times m}$. 
	Under the standard Gaussian noise model, the local cost function 
	is given by the sum of local squared reprojection errors,
	\begin{equation}
		f_i(x_i, y) 
		= \sum_{j = 1}^{n_i}\sum_{l \in L_{ij}} w^{(i)}_{jl} \norm{q^{(i)}_{jl} - \pi(T^{(i)}_{j}, y_l)}^2_2.
		\label{eq:BA_objective}
	\end{equation}
	In \eqref{eq:BA_objective}, 
	$L_{ij} \subseteq [m]$ denotes the set of points observed by agent $i$ at pose 
	$T^{(i)}_j$, 
	$\pi(\cdot, \cdot)$ is the camera projection model, 
	$q^{(i)}_{jl} \in \Real^2$ denotes noisy observation on the image plane, and $w^{(i)}_{jl} > 0$ is the corresponding measurement weight.
	\label{example:BA} 
\end{example}

\begin{example}[Collaborative Point Cloud Registration]
	\normalfont
	Multiple point cloud registration (\eg, \cite{Chaudhury13GlobalPointSets})
	is an important problem with robotic applications such as merging multiple point cloud maps or collaborative SLAM with range sensors.
	In this case, the private and shared variables are the same as \cref{example:BA}.
	The local cost function is given by,
	\begin{equation}
		f_i(x_i, y) 
		= \sum_{j = 1}^{n_i}\sum_{l \in L_{ij}} w^{(i)}_{jl} 
		\norm{y_l - R_j^{(i)} q^{(i)}_{jl} - t_j^{(i)}}^2_2,
		\label{eq:point_cloud_registration_objective}
	\end{equation}
	where $T_j^{(i)} = (R_j^{(i)}, t_j^{(i)})$ denote the rotation matrix and translation vector of the $j$th pose of agent $i$, and $q^{(i)}_{jl} \in \Real^3$ denotes noisy 3D observation in the local frame.
	\label{example:point_cloud_registration}
\end{example}

\begin{example}[Collaborative Object-Based Pose Graph Optimization]
	\normalfont
	In some applications, it suffices to produce an object-level map of the environment (\eg, \cite{Choudhary17IJRR}). 
	In this case, the set of shared variables becomes 
	$y = \{T_1, T_2, \hdots, T_m\} \in \SE(3)^m$, where $T_l$ is the pose of object $l$.
	The local cost function (using chordal distance) is given by,
	\begin{equation}
		f_i(x_i, y) 
		= \sum_{j = 1}^{n_i}\sum_{l \in L_{ij}} w^{(i)}_{jl} 
		\norm{T_l - T^{(i)}_{j} \; \widetilde{T}^{(i)}_{jl}}^2_{\Omega^{(i)}_{jl}},
		\label{eq:pgo_objective}
	\end{equation}
	where $\widetilde{T}^{(i)}_{jl} \in \SE(3)$ is a noisy relative measurement of object $l$
	collected by agent $i$ at pose $T^{(i)}_{j}$, and $\Omega^{(i)}_{jl}$ is the corresponding measurement precision matrix.
	\label{example:pgo}
\end{example}

In this work, we focus on collaborative BA (\cref{example:BA}) in our experimental validation (\cref{sec:experiments}), due to its fundamental role in multi-robot visual SLAM~\cite{Forster13CSLAM,Schmuck18CCM,Ebadi20LAMP,Schmuck21covins}.
However, we note that our approach extends beyond the above examples 
to many other multi-agent estimation problems that can be described with a factor graph \cite{Dellaert17FactorGraph}.
\optional{
In particular, these include other multi-view reconstruction problems that use alternative sensors or estimate other types of geometric primitives (\eg, planes, quadrics, and cuboids).}

\section{Proposed Algorithm}
\label{sec:algorithm}
In this section, we present our communication-efficient distributed algorithm for solving Problem~\ref{problem_formulation}.
In \cref{sec:algorithm:basic_form}, we develop the basic form of our method based on distributed approximate second-order updates.
Similar to DDF-SAM \cite{Cunningham10DDFSAM,Cunningham13DDFSAM2}, in each iteration our method analytically eliminates the updates to private variables, which leads to more effective updates and also protects the privacy of participating agents.
However, unlike DDF-SAM, our method avoids the transmission of dense matrices resulting from elimination, which makes it applicable to larger scale problems.
Furthermore, in \cref{sec:algorithm:lazy}, we augment our basic method with \emph{lazy communication}, which achieves significant communication reduction.
Lastly, \cref{sec:algorithm:pseudocode} summarizes the discussion and presents the complete algorithm. 

\subsection{Distributed Update with Analytic Elimination}
\label{sec:algorithm:basic_form}
At each iteration, agents collaboratively compute an updated solution that decreases the global cost in Problem~\ref{problem_formulation}.
To start, each agent $i$ constructs a second-order approximation $\mhat_i$ for its local cost $f_i$, which is defined at the tangent space of the current iterate $(x_i, y)$.
Intuitively, $\mhat_i$ approximates the true local cost $f_i$ when perturbing $x_i$ and $y$ on the tangent space.
Formally, given tangent vectors $(u_i, v) \in T_{x_i} \Xcal_i \times T_{y} \Ycal$, 
we define\footnote{Note that $\mhat_i$ depends on the linearization points $x_i$ and $y$. We drop this from our notation for simplicity.}
\begin{equation}
	\small
	\begin{aligned}
	& \mhat_i(u_i, v) \triangleq \\
	& \revise{f_i(x_i,y)} 
	\!+\!
	\bigg \langle 
	\underbrace{
		\bmat
		g_{ix} \\
		g_{iy}
		\emat}_{g_i}, 
	\bmat
	u_i \\
	v
	\emat
	\bigg \rangle
	\!+\!
	\frac{1}{2}
	\bigg \langle
	\bmat
	u_i \\
	v
	\emat,
	\underbrace{
		\bmat
		A_{i}      & \!\!\!C_i \\
		{C_{i}}^\top & \!\!\!B_i
		\emat}_{M_i}
	\bmat
	u_i \\
	v
	\emat
	\bigg \rangle.
	\end{aligned}
\label{eq:local_quadratic_model}
\end{equation}
In \eqref{eq:local_quadratic_model}, 
$g_i \triangleq \rgrad f_i(x_i, y)$ is the local Riemannian gradient. 
The user-specified linear map $M_i \succ 0$ serves as an {approximation} of the 
local Riemannian Hessian, and is assumed to be symmetric and positive definite. 
\revise{For geometric estimation problems such as BA \eqref{eq:BA_objective}, we obtain the second-order approximation via the Riemannian Levenberg--Marquardt (LM) method~\cite[Chapter~8]{Absil2009Book}.}
In this case, we have $M_i = {J_i} ^\top J_i + \lambda I$,
where $J_i$ is the Jacobian of agent $i$'s measurement residuals, 
and $\lambda > 0$ is a regularization parameter that ensures $M_i$ to be positive definite.

Given the local approximations $\mhat_i$, a second-order approximation of the global cost $f$ is given by $\mhat(u,v) \triangleq \sum_{i=1}^N \mhat_i(u_i, v)$, 
where we use $u$ to denote the concatenation of local tangent vectors $u_i$.
Note that $\mhat$ can be expanded as,
\begin{equation}
	\small
	\mhat(u,v) = 
	f(x,y) + 
	\bigg \langle
	\underbrace{
	\bmat
	g_x \\ g_y
	\emat}_{g},
	\bmat
	u \\ v
	\emat
	\bigg \rangle
	+ \frac{1}{2}
	\bigg \langle
	\bmat
	u \\
	v
	\emat,
	\underbrace{
		\bmat
		A        & \!\!\!C \\
		{C}^\top & \!\!\!B
		\emat}_{M}
	\bmat
	u \\
	v
	\emat
	\bigg \rangle.
	\label{eq:global_quadratic_model}
\end{equation}
It can be verified that $g = \rgrad f(x, y)$ is the Riemannian gradient of the global objective. 
The linear map $M$ in \eqref{eq:global_quadratic_model} is now an approximation of the global Riemannian Hessian.
More importantly, $M$ is a block matrix with an \emph{arrowhead sparsity pattern}, and its blocks are related to the blocks of $M_i$ in \eqref{eq:local_quadratic_model} as follows,
\begin{align}
	A \!=\! \Diag(A_1, \hdots, A_N), 
	B \! = \!\!\! \sum_{i=1}^N B_i,
	{C}^\top \!\!\!\! = \!\! \bmat {C^\top_1} \hdots {C^\top_N} \emat.
	\label{eq:ABC_def}
\end{align}

In the proposed method, we seek to compute an update for all variables by approximately minimizing $\mhat$.
To proceed, we \emph{analytically eliminate} private vector $u$ from \eqref{eq:global_quadratic_model}. 
Formally, define $\ustar(v) \triangleq \argmin_{u} \mhat(u,v)$ as the optimal private vector conditioned on the shared vector.
Furthermore, define the \emph{reduced second-order approximation} as $\hhat(v) \triangleq \mhat(\ustar(v), v)$, which only involves the shared vector $v$.
Both $\ustar(v)$ and $\hhat(v)$ admit closed-form expressions.
\begin{lemma}[Reduced second-order approximation]
	For each agent $i \in [N]$, 
	the corresponding optimal private vector is,
	\begin{equation}
		\ustar_i(v) = -A_i^{-1}(C_i v + g_{ix}), \;\; \forall i \in [N].
		\label{eq:ustar_def}
	\end{equation}
	Furthermore, $\hhat(v)$ has the closed-form expression,
	\begin{align}
	\!\!\!\!\!\!\!	\hhat(v) = f(x,y) \!-\! \frac{1}{2} \inner{g_x}{A^{-1} g_x}
		\!+\! \inner{w}{v} \! +\! \frac{1}{2} \inner{v}{S v},\label{eq:reduced_global_quadratic_model} 
		\end{align}
		where vector $w$ and matrix $S$ are defined as,
		\begin{align}
		w &\triangleq \sum_{i=1}^N w_i, \ \  w_i \triangleq g_{iy} - C_i^\top A_i^{-1} g_{ix}, \; \forall i \in [N]. \label{eq:w_def} \\
		S &\triangleq \sum_{i=1}^N S_i, \ \ \; S_i \triangleq B_i - C_i^\top A_i^{-1} C_i, 
		\; \forall i \in [N]. \label{eq:S_def}
	\end{align}
	\label{lem:schur_complement}
	\vspace{-0.3cm}
\end{lemma}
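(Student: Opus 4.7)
The statement is a standard Schur-complement / partial-minimization computation for a block-arrowhead quadratic, so my plan is to exploit the separability that the arrowhead structure of $M$ induces across agents, solve each private subproblem in closed form, and then aggregate.

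First, I would observe from \eqref{eq:global_quadratic_model} together with \eqref{eq:ABC_def} that, because $A = \Diag(A_1,\dots,A_N)$ and $C^\top = \bmat C_1^\top \cdots C_N^\top \emat$, the cross terms $\langle u_i, A_j u_j\rangle$ vanish for $i \ne j$. Hence $\mhat(u,v) = \sum_{i=1}^N \mhat_i(u_i, v)$, which decouples the minimization over $u$ into $N$ independent per-agent problems: $\min_u \mhat(u,v) = \sum_{i=1}^N \min_{u_i} \mhat_i(u_i, v)$. This is the critical structural observation that will let me minimize locally and then sum.

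Next, for each agent $i$, the function $u_i \mapsto \mhat_i(u_i, v)$ is a strictly convex quadratic on $T_{x_i}\Xcal_i$ (since $A_i \succ 0$ by the assumption $M_i \succ 0$, which is preserved under the principal submatrix $A_i$). Setting its tangent-space derivative to zero gives the first-order condition $A_i u_i + C_i v + g_{ix} = 0$, which yields the claimed expression $\ustar_i(v) = -A_i^{-1}(C_i v + g_{ix})$ in \eqref{eq:ustar_def}. Uniqueness follows from $A_i \succ 0$.

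Finally, to obtain $\hhat(v)$, I would substitute $\ustar_i(v)$ back into $\mhat_i(u_i,v)$ and collect terms. The bilinear term $\langle u_i, C_i v\rangle$, the linear term $\langle g_{ix}, u_i\rangle$, and the quadratic term $\tfrac{1}{2}\langle u_i, A_i u_i\rangle$ combine to produce exactly $-\tfrac{1}{2}\langle g_{ix}, A_i^{-1} g_{ix}\rangle - \langle C_i^\top A_i^{-1} g_{ix}, v\rangle - \tfrac{1}{2}\langle v, C_i^\top A_i^{-1} C_i v\rangle$. Adding the remaining $v$-only pieces $\langle g_{iy}, v\rangle + \tfrac{1}{2}\langle v, B_i v\rangle$ and the constant $f_i(x_i,y)$, the per-agent reduced model becomes $f_i(x_i,y) - \tfrac{1}{2}\langle g_{ix}, A_i^{-1} g_{ix}\rangle + \langle w_i, v\rangle + \tfrac{1}{2}\langle v, S_i v\rangle$, with $w_i$ and $S_i$ as defined in \eqref{eq:w_def}--\eqref{eq:S_def}. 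Summing over $i$ and using $f(x,y) = \sum_i f_i(x_i,y)$ together with the block-diagonal identity $\langle g_x, A^{-1} g_x\rangle = \sum_i \langle g_{ix}, A_i^{-1} g_{ix}\rangle$ (which again follows from $A$ being block-diagonal), I recover \eqref{eq:reduced_global_quadratic_model}.

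There is no real obstacle here; the only mildly delicate step is bookkeeping the cross term $\langle u_i, C_i v\rangle$ carefully so that the coefficient on $\langle C_i^\top A_i^{-1} g_{ix}, v\rangle$ does not get a wrong sign after cancellation with the contribution from $\tfrac{1}{2}\langle u_i, A_i u_i\rangle$. Apart from that, the argument is purely algebraic and respects the tangent-space structure because all operations are taken within the fixed tangent spaces $T_{x_i}\Xcal_i$ and $T_y\Ycal$ at the current iterate.
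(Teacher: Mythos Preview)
Your proposal is correct and uses essentially the same Schur-complement partial-minimization argument as the paper. The only cosmetic difference is the order of operations: the paper first minimizes the global quadratic \eqref{eq:global_quadratic_model} over all of $u$ and then reads off the per-agent expressions from the block-diagonal structure of $A$, whereas you first split $\mhat$ into the per-agent models $\mhat_i$, minimize each over $u_i$, and then sum---both routes arrive at the same \eqref{eq:ustar_def}--\eqref{eq:S_def} with identical bookkeeping.
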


In the following, we refer to $w$ in \eqref{eq:w_def} and $S$ in \eqref{eq:S_def} as the \emph{reduced gradient} and \emph{reduced Hessian}, respectively.
\remove{For nonlinear least squares problems, Lemma~\ref{lem:schur_complement} is commonly known as the Schur complement trick, and is widely used in existing centralized  \cite{gtsam,g2o,ceres-solver} and distributed solvers \cite{Cunningham10DDFSAM,Cunningham13DDFSAM2}. }
\revise{The analytic elimination technique presented above has been widely used to solve SLAM and BA \cite{Triggs00BA}, and is a special case of the \emph{variable projection} approach to solve nonlinear least squares problem \cite{OLeary2013VariableProjection}. In the distributed setting, }
Lemma~\ref{lem:schur_complement}
suggests that the server can first aggregate $w_i$ and $S_i$ from all agents, 
and then minimize $\hhat(v)$ by computing $\vstar = - S^{-1} w$. 
This type of approach has been proposed by DDF-SAM \cite{Cunningham10DDFSAM,Cunningham13DDFSAM2}.
Nevertheless, for large-scale problems such as BA, this approach is less suitable as it requires the communication of the $S_i$ matrices, which are generally \emph{dense} and thus expensive to evaluate, store, and transmit. 

To design a communication-efficient update, we instead resort to finding an {approximate} minimizer of $\hhat(v)$.
In the following, let $k$ denote the iteration number.
We let our approximate minimizer of $\hhat(v)$ take the following form,
\begin{equation}
	v^k \triangleq -\gamma P^k w^k,
	\label{eq:server_update}
\end{equation}
where $\gamma > 0$ is a constant stepsize, and $P^k$ is a \emph{sparse} matrix that approximates the inverse of the reduced Hessian $S^k$.
Viewing $w^k$ as the reduced gradient and $P^k$ as a preconditioner, 
we may interpret \eqref{eq:server_update} as a single step of preconditioned Riemannian gradient descent.
\optional{Note that our method is agnostic to the specific choice of preconditioners.}
In the following, we use the block Jacobi preconditioner \cite{Golub96Matrix} due to its simplicity,
\begin{equation}
	P^k = \left(\sum_{i=1}^N D_i^k\right)^{-1}, \;\; D_i^k \triangleq \Diag(S^k_{i,1}, \hdots, S^k_{i,m}), 
	\label{eq:jacobi}
\end{equation}
where $S^k_{i,l}$ is the $l$-th diagonal block of $S^k_i$.
Note that each block $l$ corresponds to a single element $y_l$ in the shared variable $y$ (see \cref{sec:problem_formulation}). 
With \eqref{eq:server_update} and \eqref{eq:jacobi}, agent $i$ only needs to upload $w^k_i$ and the diagonal blocks of $S_i^k$ to the server. 
Furthermore, the server can easily compute $P^k$ as it only requires inverting a block-diagonal matrix.

Once the shared update $v^k$ is computed, we leverage Lemma~\ref{lem:schur_complement} to compute the corresponding optimal second-order update for each agent's private variable, 
\begin{equation}
	u_i^k \triangleq u_i^\star(v^k), \; \forall i \in [N].
	\label{eq:agent_update}
\end{equation}
Finally, we compute the updated estimates using retraction,
\begin{equation}
	y^{k+1} = \Retr_{y^k}(v^k), \; x^{k+1}_i = \Retr_{x^k_i} (u_i^k), \forall i \in [N],
	\label{eq:retraction}
\end{equation}
and the algorithm proceeds to the next iteration.

\subsection{Incorporating Lazy Communication}
\label{sec:algorithm:lazy}
In the method developed so far, at each iteration, 
agent $i$ needs to upload $w_i$ and $D_i$ to the server.
For larger problems, the resulting transmission can still become too expensive.
In this subsection, we present a technique to further reduce communication.
The core idea behind our approach is \emph{lazy communication}: 
when some blocks of $w_i$ and $D_i$ do not change significantly from previous iterations, agent $i$ simply skips the transmission of those blocks, and the server reuses values received at previous iterations for its computation. 
In the following, we describe this process in detail for the computation of preconditioner and the reduced gradient, respectively.
Without loss of generality, we present our method from the perspective of agent $i$.

\myParagraph{Lazy communication of preconditioner} 
Let $k$ be the current iteration number. 
For each block $l$, let $k' < k$ be the last iteration when agent $i$ uploads $S^{k'}_{i,l}$ to the server.\footnote{For notation simplicity, we drop the dependence of $k'$ on $i$ and $l$.}
Using $S^{k'}_{i,l}$, we can compute an approximation of $S^k_{i,l}$ as, 
\begin{equation}
	\widetilde{S}^k_{i,l} \triangleq 
	\transporter{k'}{k}_l \circ S^{k'}_{i,l} \circ \transporter{k}{k'}_l,
	\label{eq:apx_jacobi_block}
\end{equation}
where $\transporter{k}{k'}_l$ is the matrix that represents a \emph{transporter}~\cite[Sec.~10.5]{Boumal20Book} from the tangent space at iteration $k$ to iteration $k'$, and $\transporter{k'}{k}_l$ is its adjoint. 
Intuitively, transporters are needed to ensure that the approximation defined in \eqref{eq:apx_jacobi_block} represents a valid linear map on the tangent space at iteration $k$. 
\revise{For matrix manifolds, a simple and computationally efficient transporter can be obtained from orthogonal projections to tangent spaces \cite[Proposition~10.60]{Boumal20Book}.}
Note that since \eqref{eq:apx_jacobi_block} only uses past information, both the server and agent $i$ can compute $\widetilde{S}^k_{i,l}$ \emph{without any communication}.

The above approximation leads to the following lazy communication scheme.
First, agent $i$ compares $S^k_{i,l}$ and its approximate version $\widetilde{S}^k_{i,l}$ locally. 
Then, agent $i$ only uploads $S^k_{i,l}$ to the server if the approximation error is large,
\begin{equation}
	\norm{S_{i,l}^k - \widetilde{S}^k_{i,l}} > \delta_p \norm{S_{i,l}^k},
	\label{eq:lazy_jacobi_rule}
\end{equation}
where $\delta_p \geq 0$ is a user defined threshold. 
On the other hand, if \eqref{eq:lazy_jacobi_rule} does not hold (\ie, approximation error is small), agent $i$ skips the communication of $S^k_{i,l}$, and the server uses the approximation $\widetilde{S}^k_{i,l}$ instead.

In summary, for each agent $i$, instead of using $D^k_i$ as defined in \eqref{eq:jacobi}, the server now uses an approximation $\widehat{D}^k_i$ that consists of a mixture of exact and approximate blocks. Specifically, the $l$-th diagonal block of $\widehat{D}^k_i$ is given by,
\begin{equation}
	\widehat{D}^k_{i,l} \triangleq \begin{cases}
		S^k_{i,l}, \;\; \text{if \eqref{eq:lazy_jacobi_rule} holds,} \\
		\widetilde{S}^k_{i,l}, \;\; \text{otherwise.}
	\end{cases}
	\label{eq:Dhat_i_def}
\end{equation}
Finally, the preconditioner becomes
$P^k = (\sum_{i=1}^N \widehat{D}^k_i)^{-1}$.

\myParagraph{Lazy communication of reduced gradient} 
We can employ a similar strategy to design a lazy communication rule for the transmission of the reduced gradient $w_i$, which is needed by the server to compute the update step in \eqref{eq:server_update}.
For this purpose, let us view $w_i$ as a block vector, where each block $w_{i,l}$ corresponds to a single element $y_l$ in the shared variable. 
For each block $l$, let $k' < k$ be the last iteration when agent $i$ uploads $w^{k'}_{i,l}$ to the server.
Using $w^{k'}_{i,l}$, we can compute an approximation of $w^k_{i,l}$ as follows,
\begin{equation}
	\widetilde{w}^k_{i,l} \triangleq \transporter{k'}{k}_l \left( w^{k'}_{i,l} \right).
	\label{eq:apx_reduced_grad_block}
\end{equation}
Once again, a transporter is needed to ensure \eqref{eq:apx_reduced_grad_block} defines a valid tangent vector on the tangent space at the current iteration $k$.
\remove{Since $\widetilde{w}^k_{i,l}$ only involves past information, it can be computed by the server and agent $i$ without any communication.}
\revise{Similar to \eqref{eq:apx_jacobi_block}, computing \eqref{eq:apx_reduced_grad_block} does not require communication between the server and agent $i$.}

Similar to the previous development, at each iteration, agent $i$ only uploads $w^k_{i,l}$ to the server if it differs significantly from $\widetilde{w}^k_{i,l}$. 
Specifically, we define the following communication triggering condition,
\begin{equation}
	\norm{\wtilde^k_{i,l} - w^k_{i,l}}^2_{P^k_l} > \frac{1}{m N^2} \sum_{d=1}^{\bar{d}} \epsilon_d \norm{\what^{k-d}}^2_{P^{k-d}}.
	\label{eq:lazy_reduced_grad_rule}
\end{equation}
The left hand side of \eqref{eq:lazy_reduced_grad_rule} measures the approximation error, using the norm associated with the current preconditioner of this block. 
The right hand side defines a threshold on the approximation error using information from the past $\bar{d}$ iterations. 
Specifically, $\what^{k-d}$ is the approximate reduced gradient used by the server at iteration $k-d$, and its exact definition is provided in \eqref{eq:what_def}.
Both $\bar{d}$ and weights $\{\epsilon_d \geq 0, d = 1, \hdots \bar{d}\}$ are user specified constants.
Note that setting $\epsilon_d = 0$ forces the agent to always upload.

Consequently, on the server's side,
instead of using the up-to-date $w_i^k$ vector for agent $i$, it uses an approximate version $\what^k_i$ that consists of both exact and approximate blocks,
\begin{equation}
	\what^k_{i,l} \triangleq \begin{cases}
		w^k_{i,l}, \;\; \text{if \eqref{eq:lazy_reduced_grad_rule} holds,} \\
		\wtilde^k_{i,l}, \;\; \text{otherwise.}
	\end{cases}
	\label{eq:what_i_def}
\end{equation}
Finally, instead of using $w^k$ to compute the update in \eqref{eq:server_update}, the server uses the approximation defined as,
\begin{equation}
	\what^k \triangleq \sum_{i=1}^N \what_i^k.
	\label{eq:what_def}
\end{equation}
We conclude this subsection by noting that the lazy communication condition for reduced gradient \eqref{eq:lazy_reduced_grad_rule} is more complex than the condition for preconditioner \eqref{eq:lazy_jacobi_rule}. 
The more complex rule \eqref{eq:lazy_reduced_grad_rule} is needed for our convergence analysis, and we provide more discussions in \cref{sec:convergence}.

\subsection{The Complete Algorithm}
\label{sec:algorithm:pseudocode}

\begin{algorithm}[t]
	\caption{\textsc{\Algorithm} }
	\label{alg:lazy}
	\begin{algorithmic}[1]
		\small 
		\State Initialize solution $x^0, y^0$. 
		\For{iteration $k = 0, 1, \hdots $}
			\State In parallel, agent $i$ computes the local second-order approximation $\mhat_i$~\eqref{eq:local_quadratic_model}, and evaluates $w^k_i$ \eqref{eq:w_def} and $D^k_i$ \eqref{eq:jacobi}
			\State {\small \textcolor{green!50!black}{// Lazy communication of preconditioner}} \label{alg:lazy:jacobi_start}
			\For{each agent $i$ \textbf{in parallel}}
				\For{each block $l$}
					\State Upload $S_{i,l}^k$ to server if \eqref{eq:lazy_jacobi_rule} is true
				\EndFor
			\EndFor
			\State Server collects uploads and forms $\widehat{D}^k_i$ \eqref{eq:Dhat_i_def} for each agent
			\State Server computes preconditioner $P^k = (\sum_{i=1}^N \widehat{D}^k_i)^{-1}$ and broadcasts to agents \label{alg:lazy:jacobi_end}
			\State {\small \textcolor{green!50!black}{// Lazy communication of reduced gradient}} \label{alg:lazy:reduced_grad_start}
			\For{each agent $i$ \textbf{in parallel}}
				\For{each block $l$}
				\State Upload $w_{i,l}^k$ to server if \eqref{eq:lazy_reduced_grad_rule} is true
				\EndFor
			\EndFor
			\State Server collects uploads and forms $\widehat{w}^k_i$ and $\what^k$ \label{alg:lazy:reduced_grad_end}
			\State {\small \textcolor{green!50!black}{// Compute update vector and next iterate}} \label{alg:lazy:update_start}
			\State Server computes $v^k = -\gamma P^k \what^k$ and broadcasts to agents
			\State In parallel, each agent computes $u^k_i = \ustar_i(v^k)$; \cf \eqref{eq:ustar_def}
			\State Both server and agents update iterates 
			$$ \small y^{k+1} = \Retr_{y}(v^k), \; x^{k+1}_i = \Retr_{x^k_i}(u_i^k).$$\label{alg:lazy:update_end}
			\vspace{-0.5cm}
		\EndFor
	\end{algorithmic}
\end{algorithm}

We collect the steps discussed above and present the Lazily Aggregated Reduced Preconditioned Gradient (\Algorithm) algorithm with the 
complete pseudocode in Algorithm~\ref{alg:lazy}.
Each iteration of \Algorithm has three stages.
The first stage (lines~\ref{alg:lazy:jacobi_start}-\ref{alg:lazy:jacobi_end}) performs the lazy communication of the preconditioner. 
The second stage (lines~\ref{alg:lazy:reduced_grad_start}-\ref{alg:lazy:reduced_grad_end}) performs the lazy communication of the reduced gradients.
We note that this stage needs to happen after the first stage, as the triggering rule for the reduced gradient \eqref{eq:lazy_reduced_grad_rule} depends on the preconditioner $P^k$.
The third stage (lines~\ref{alg:lazy:update_start}-\ref{alg:lazy:update_end}) uses the lazily aggregated information to compute the next iterate of the algorithm.

\begin{remark}[Novelty with respect to \cite{Chen18LAG}]
	\normalfont
	Our lazy communication scheme is inspired by Chen~\etal~\cite{Chen18LAG}, who study lazily aggregated gradient methods in distributed optimization.
	However, our algorithm and analysis (\cref{sec:convergence}) consists of the following important innovations to account for the unique challenges of Problem~\ref{problem_formulation}: 
	(i) we consider problems with \emph{non-convex} manifold constraints that are prevalent in robot perception applications, 
	(ii) we incorporate the use of approximate second-order updates that require substantial changes in the convergence analysis, 
	(iii) we handle private variables via analytic elimination,
	and (iv) we propose lazy communication on individual blocks of the gradient and preconditioner, which leads to further communication reduction. 
	\label{remark:novelty}
\end{remark}

\begin{remark}[Implementation]
    \normalfont
    In many applications, such as collaborative SLAM, each agent $i$ only observes parts of the shared model during navigation.
    Consequently, the local cost $f_i$ only depends on the observed subset of the shared variable $y$.
    In our implementation and experiments (\cref{sec:experiments}), 
    we account for this fact by performing lazy communication \emph{only} on the observed parts of $y$ for each agent.
    \label{remark:implementation}
\end{remark}

\begin{figure*}[ht!]
	\centering
	\begin{subfigure}[t]{0.19\linewidth}
		\includegraphics[width=\textwidth]{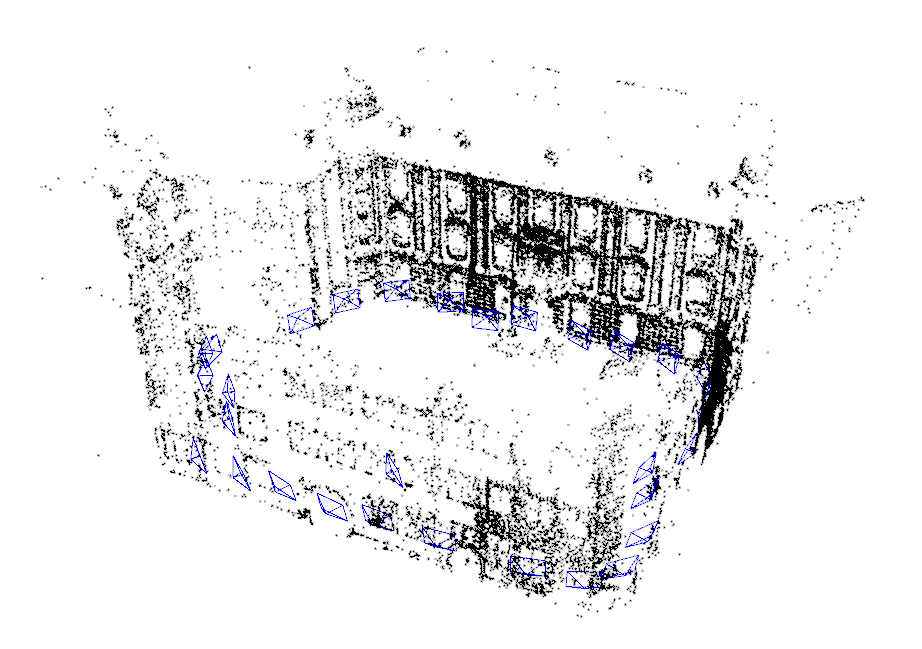}
		\caption{\textsf{Castle30} dataset}
		\label{fig:castle30:dataset}
	\end{subfigure}
	\hfill
	\begin{subfigure}[t]{0.19\linewidth}
		\includegraphics[width=\textwidth]{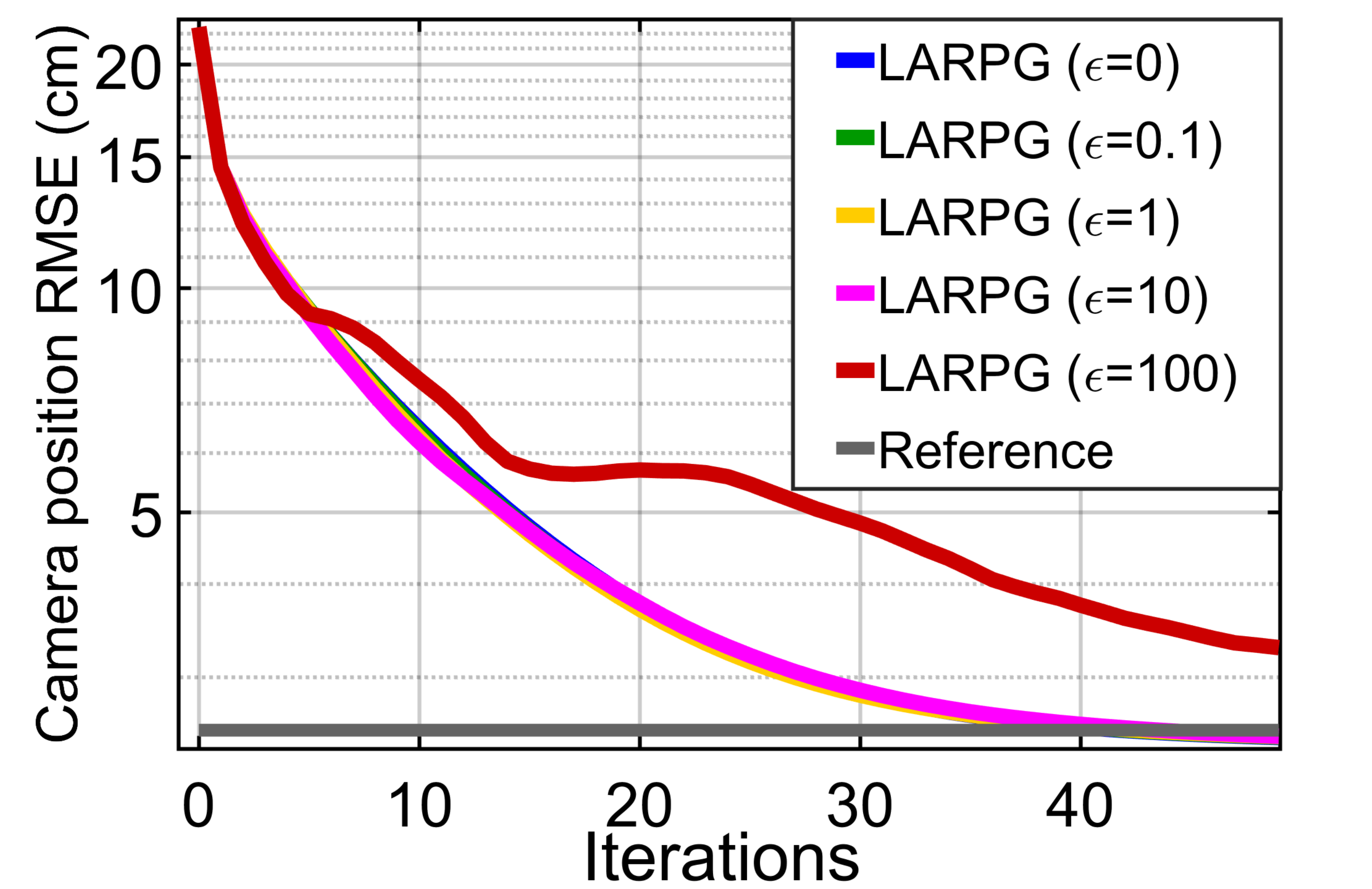}
		\caption{Effect of varying $\epsilon$ (RMSE vs. iteration)}
		\label{fig:castle30:cam_error_vs_iter}
	\end{subfigure}	
	\hfill
	\begin{subfigure}[t]{0.19\linewidth}
		\includegraphics[width=\textwidth]{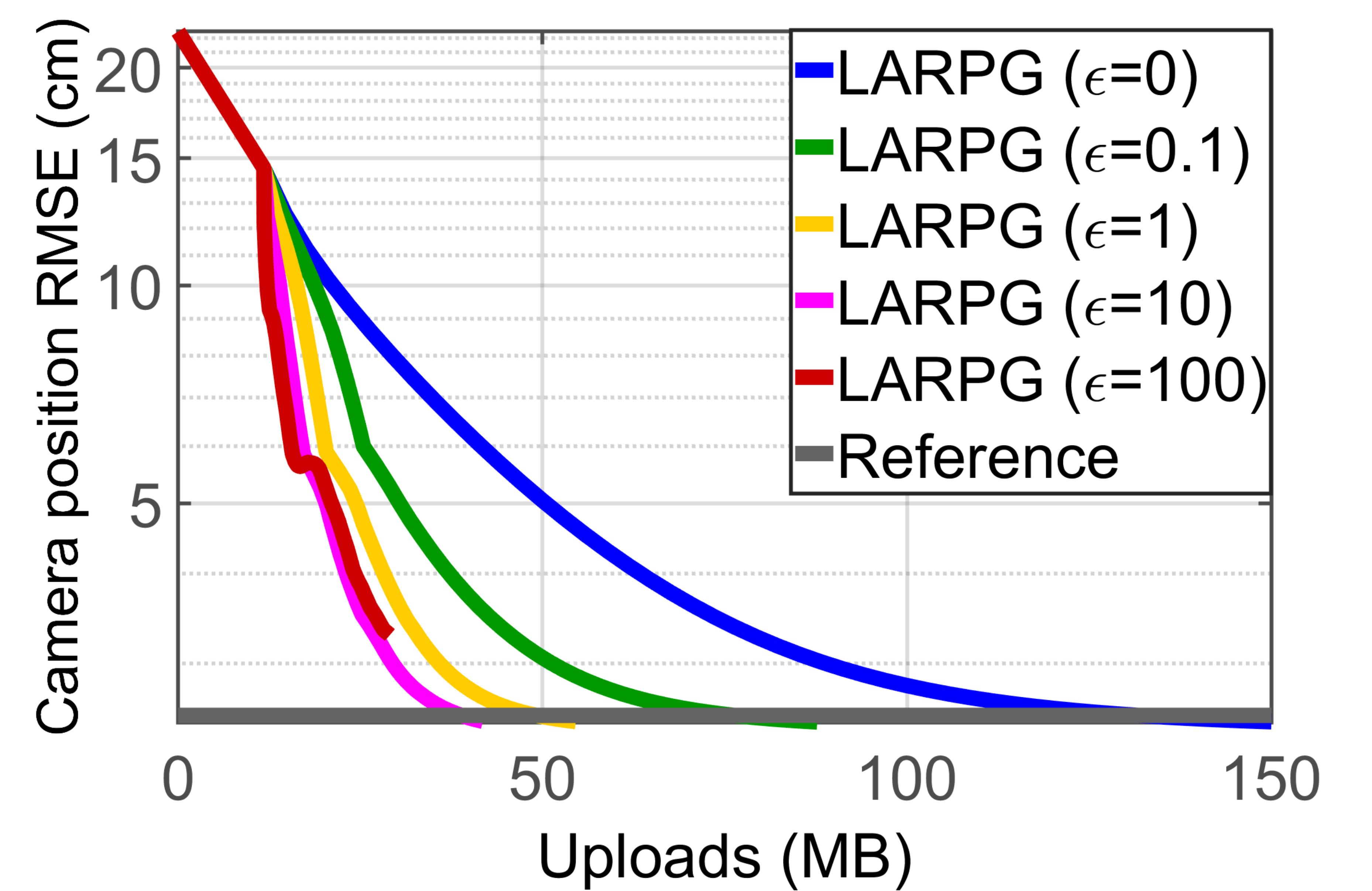}
		\caption{Effect of varying $\epsilon$ (RMSE vs. total uploads)}
		\label{fig:castle30:cam_error_vs_comm}
	\end{subfigure}
	\hfill
	\begin{subfigure}[t]{0.19\linewidth}
		\includegraphics[width=\textwidth]{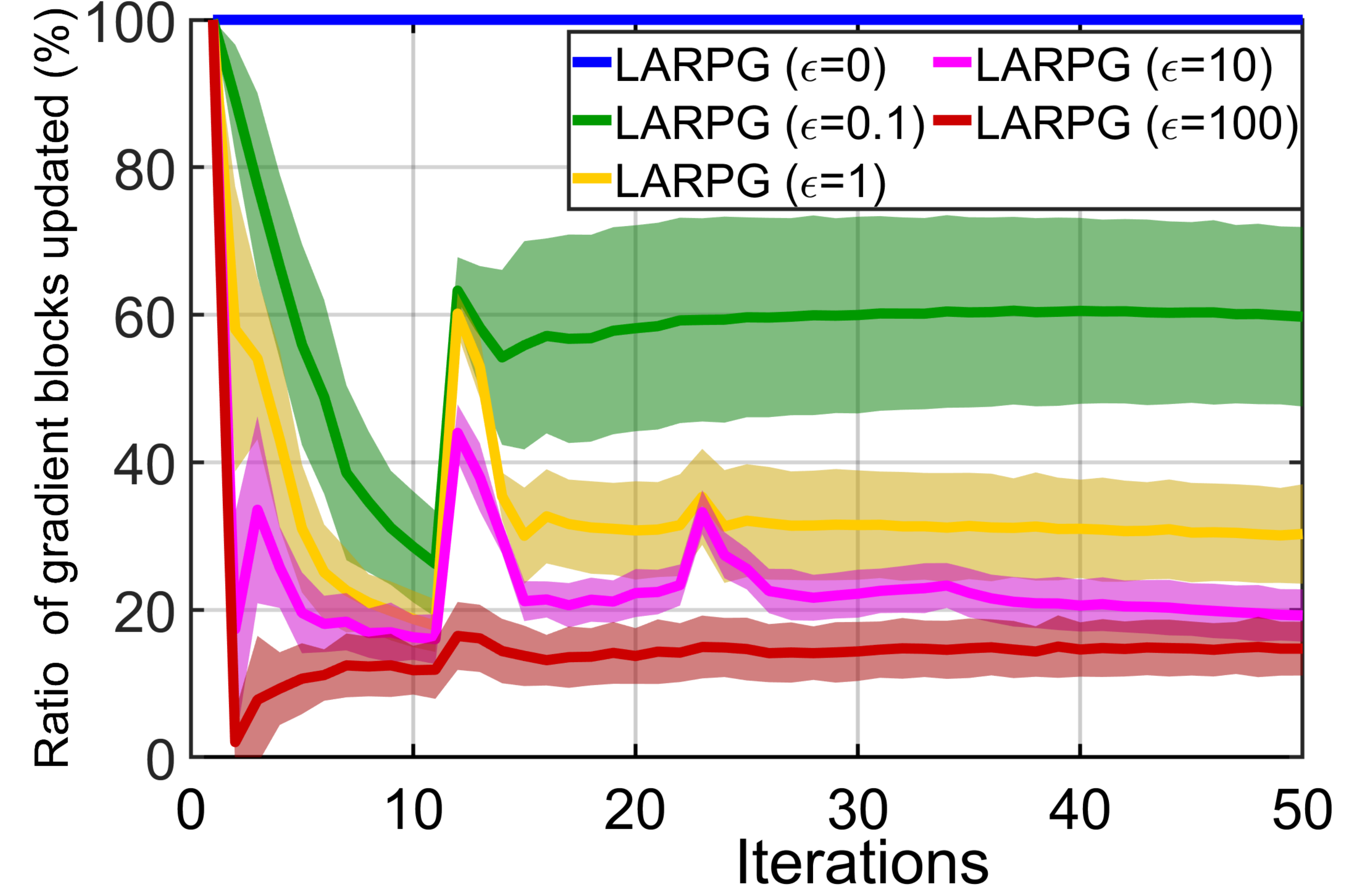}
		\caption{Effect of varying $\epsilon$ (\% gradient blocks uploaded)}
		\label{fig:castle30:lazy_comm_pattern}
	\end{subfigure}
	\hfill
	\begin{subfigure}[t]{0.19\linewidth}
		\includegraphics[width=\textwidth]{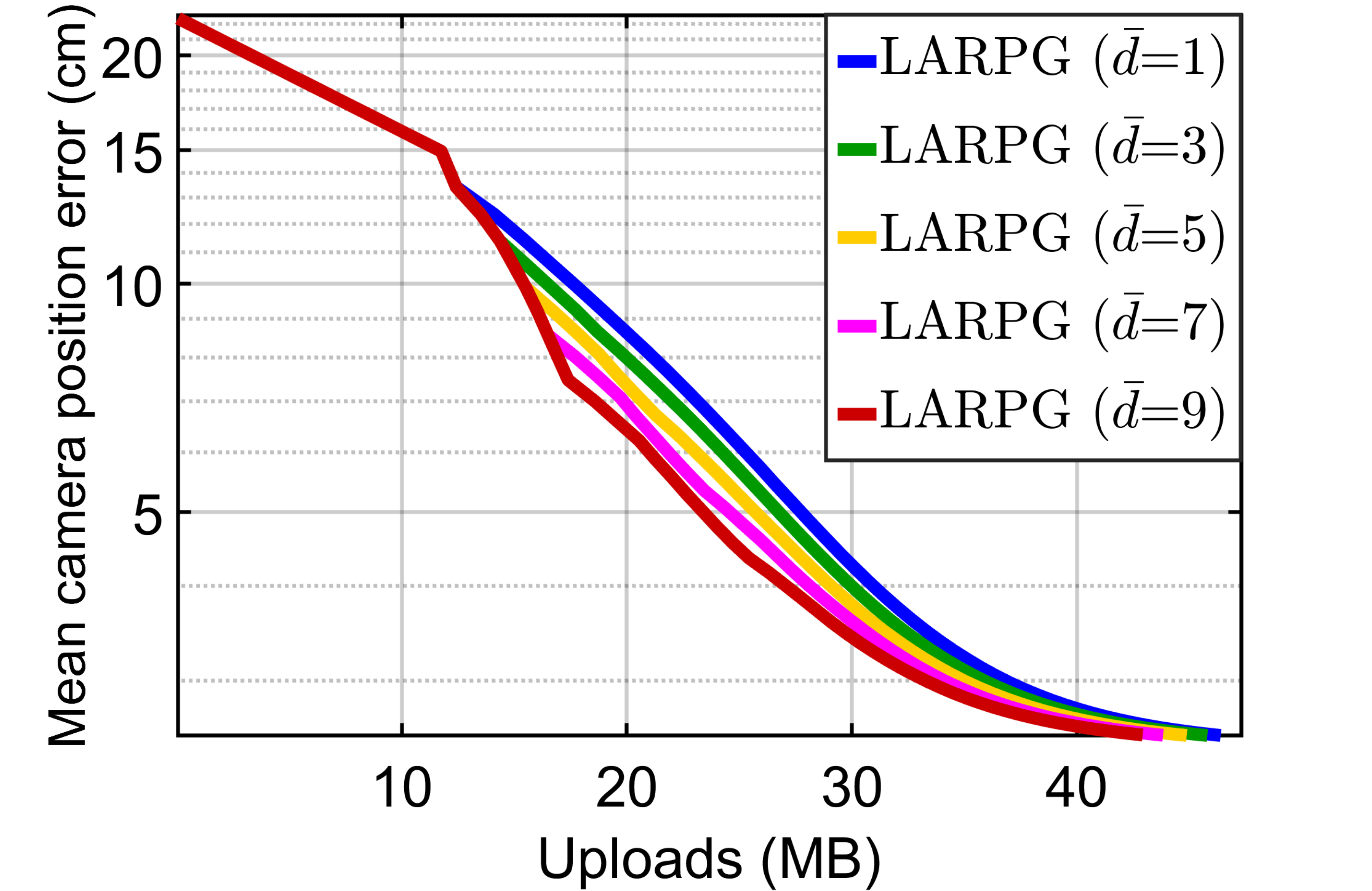}
		\caption{Effect of varying $\bar{d}$ (RMSE vs. total uploads)}
		\label{fig:castle30:vary_queue_size}
	\end{subfigure}	
	\vspace{-0.2cm}
	\caption{Evaluation of lazy communication on \textsf{Castle30} dataset \cite{Strecha08Benchmark}.
	We evaluate the performance of \Algorithm under varying values of parameters $\epsilon$ and $\bar{d}$ that control the behavior of lazy communication.
	} \vspace{-0.3cm}
	\label{fig:castle30}
\end{figure*}

\section{Convergence Analysis}
\label{sec:convergence}
Since \Algorithm allows agents to lazily upload information to the server, 
it is unclear if the algorithm can converge to a desired solution in general. 
In this section, we provide a rigorous answer to this important question.
In particular, we show that under mild technical conditions, \Algorithm provably converges to a first-order critical point of Problem~\ref{problem_formulation}, despite the use of lazy communication. 
Assumption~\ref{as:assumptions} below summarizes these technical assumptions.
\revise{
	\begin{assumption}
		There exist constants $L > \mu \geq c_g > 0$ and $\mu_p, \sigmabar > 0$ such that the following conditions hold at any iteration $k \in \Natural$ of \cref{alg:lazy},
		\begin{enumerate}[start=1,label={A\arabic*}]
			\item (Lipschitz-type gradient for pullbacks \cite{Boumal2018Convergence}) 
			Let $f^k \triangleq f(x^k,y^k)$ and $g^k \triangleq \rgrad f(x^k,y^k)$ denote the objective and Riemannian gradient at iteration $k$.
			The \emph{pullback function} $\fhat^k(u,v) \triangleq f(\Retr_{x^k}(u), \Retr_{y^k}(v))$ satisfies
			\begin{equation}
				\!\!\!\!\!\!\!\!\!\!\left | \fhat^k(u,v) - 
				\left [f^k + 
				\left \langle
				\bmat
				g^k_x \\ g^k_y
				\emat,
				\bmat
				u \\ v
				\emat
				\right \rangle
				\right ]
				\right |
				\leq
				\frac{c_g}{2}
				\norm{\bmat
					u \\ v
					\emat}^2,
				\label{eq:lipschitz_type_gradient}
			\end{equation}
			for all $(u,v) \in T_{x^k} \Xcal \times T_{y^k} \Ycal$.
			\label{as:lipschitz_type_gradient}
			\item (Bounded Hessian approximation)
			The approximate Hessian $M^k$ at iteration $k$ satisfies $\mu I \preceq M^k \preceq L I$.
			\label{as:hessian_apx_bound}
			\item (Preconditioner) 
			The preconditioner $P^k$ at iteration $k$ satisfies $P^k \succeq \mu_p I$ and $\norm{S^kP^k}_{P^k} \leq \sigmabar$.
			\label{as:preconditioner}
		\end{enumerate}
		\label{as:assumptions}
	\end{assumption}
}
Above, \eqref{as:lipschitz_type_gradient} is first introduced in \cite{Boumal2018Convergence} as a generalization of the standard Lipschitz smoothness assumption to the Riemannian setting.
\optional{Intuitively, \eqref{as:lipschitz_type_gradient} bounds the pullback function by its local linearization. }
Prior work in distributed BA \cite{Eriksson16ConsensusBA,Zhang17DistBA} requires similar smoothness conditions for convergence.
In comparison, our assumptions and convergence guarantees extend beyond BA and hold for more general problems.
\eqref{as:hessian_apx_bound} assumes that the employed Hessian approximation $M$ is bounded, which is also a standard assumption. 
Lastly, \eqref{as:preconditioner} assumes that the preconditioner $P^k$ is sufficiently positive definite, and the approximation error of $P^k$ as the inverse of the reduced Hessian $S^k$ is bounded.
We note that the latter two assumptions \eqref{as:hessian_apx_bound} and \eqref{as:preconditioner} can be easily satisfied, since the user has freedom to change what Hessian approximation $M$ and preconditioner $P$ to use; for example, $M = c_g I$ is always a valid choice that satisfies \eqref{as:hessian_apx_bound}.

\optional{
The key to our convergence analysis (inspired by \cite{Chen18LAG}) is to study the iterates of \Algorithm with respect to a \emph{Lyapunov function},
\begin{equation}
	V^k \triangleq f(x^k, y^k) + \sum_{d=1}^{\bar{d}} \beta_d \norm{\what^{k-d}}^2_{P^{k-d}}.
	\label{eq:lyapunov_paper}
\end{equation}
At iteration $k$, $V^k$ combines the cost function with weighted squared norms of past approximate reduced gradients. 
Intuitively, these squared norms account for the approximation errors caused by lazy communication.
In Appendix~\ref{sec:proof_convergence}, we show that \Algorithm is a \emph{descent} method with respect to $V^k$.
This enables us to establish our main theoretical result.
}

\begin{theorem}
	Under Assumption~\ref{as:assumptions},
	there exist suitable choices of algorithm parameters
	$\gamma, \bar{d}$, and $\{\epsilon_d \geq 0, d = 1, \hdots, \bar{d}\}$ such that
	after $K$ iterations, the iterates generated by \cref{alg:lazy} satisfy,
	\begin{equation}
	\min_{k \in [K]} \norm{\rgrad f(x^k, y^k)}^2 = O(1/K). 
	\label{eq:convergence}
	\end{equation}
	\label{thm:convergence}
\end{theorem}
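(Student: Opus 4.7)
The plan is to follow a Lyapunov descent strategy in the spirit of Chen~\etal~\cite{Chen18LAG}, adapted to non-convex Riemannian constraints, approximate second-order steps, and the analytic elimination of private variables. I would introduce the Lyapunov function
\[
V^k \triangleq f(x^k,y^k) + \sum_{d=1}^{\bar{d}} \beta_d\, \norm{\what^{k-d}}^2_{P^{k-d}},
\]
with positive weights $\beta_1 > \cdots > \beta_{\bar d} > 0$ to be fixed later, and aim for a per-iteration descent of the form $V^{k+1} \leq V^k - c\,\norm{g^k}^2$ for some $c > 0$. Telescoping over $k = 0, \ldots, K-1$ and using that $f$ is bounded below while the tail sum in $V^k$ is nonnegative would then yield $\min_{k \in [K]} \norm{g^k}^2 = O(1/K)$.

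First, applying the pullback bound~\eqref{as:lipschitz_type_gradient} at the tangent step $(u^k, v^k)$ gives $f^{k+1} \leq f^k + \inner{g^k}{(u^k,v^k)} + \tfrac{c_g}{2}\norm{(u^k,v^k)}^2$. Since $c_g \leq \mu$ by the ordering in Assumption~\ref{as:assumptions}, the quadratic term is dominated by $\tfrac{1}{2}\inner{(u^k,v^k)}{M^k(u^k,v^k)}$, so the right-hand side is bounded by the quadratic model $\mhat^k(u^k,v^k)$. Substituting $u_i^k = \ustar_i(v^k)$ collapses $\mhat^k$ to the reduced model $\hhat^k(v^k)$ of Lemma~\ref{lem:schur_complement}, yielding
\[
f^{k+1} \leq f^k - \tfrac{1}{2}\inner{g_x^k}{A^{-1} g_x^k} + \inner{w^k}{v^k} + \tfrac{1}{2}\inner{v^k}{S^k v^k}.
\]
Plugging $v^k = -\gamma P^k \what^k$, using $\norm{S^k P^k}_{P^k} \leq \sigmabar$ from~\eqref{as:preconditioner}, splitting $\what^k = w^k + (\what^k - w^k)$, and applying Young's inequality produces a bound of the shape
\[
f^{k+1} \leq f^k - \tfrac{1}{2L}\norm{g_x^k}^2 - \left(\tfrac{\gamma}{2} - O(\gamma^2)\right)\norm{w^k}^2_{P^k} + O(\gamma) \norm{\what^k - w^k}^2_{P^k}.
\]

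The central step, and main obstacle, is controlling the residual $\norm{\what^k - w^k}^2_{P^k}$ induced by lazy communication. Because $P^k$ is block-diagonal (being the inverse of a block-diagonal matrix), the $P^k$-norm separates cleanly across blocks, and by the triggering rule~\eqref{eq:lazy_reduced_grad_rule} combined with Cauchy--Schwarz across the $N$ agents and $m$ blocks, one obtains
\[
\norm{\what^k - w^k}^2_{P^k} \leq \sum_{d=1}^{\bar d} \epsilon_d\, \norm{\what^{k-d}}^2_{P^{k-d}},
\]
which involves exactly the past-gradient norms carried by $V^k$. The standard LAG absorption trick then applies: telescoping $V^k$ pays a ``discount'' $\beta_d - \beta_{d+1}$ on each historical term at every iteration, and one chooses the weights $\{\beta_d\}$ recursively so that each discount dominates the fresh residual $O(\gamma)\epsilon_d$, together with similar contributions from expanding $\norm{\what^k}^2_{P^k} \leq 2\norm{w^k}^2_{P^k} + 2\norm{\what^k - w^k}^2_{P^k}$. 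Choosing $\delta_p$ in~\eqref{eq:lazy_jacobi_rule} small enough keeps~\eqref{as:preconditioner} valid, so all the norm equivalences used above remain intact throughout.

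Finally, to pass from the retained negative terms $-\tfrac{1}{2L}\norm{g_x^k}^2 - c'\norm{w^k}^2_{P^k}$ to $-c\norm{g^k}^2$, I would use $P^k \succeq \mu_p I$ from~\eqref{as:preconditioner} together with the identity $w^k = g_y^k - \sum_i C_i^\top A_i^{-1} g_{ix}$ from Lemma~\ref{lem:schur_complement}. Assumption~\ref{as:assumptions} bounds $\norm{C_i}$ and $\norm{A_i^{-1}}$ in terms of $L$ and $\mu^{-1}$, so $\norm{g_y^k}^2 \leq O(\norm{w^k}^2_{P^k}) + O(\norm{g_x^k}^2)$ and hence $\norm{g^k}^2 \leq c_1 \norm{g_x^k}^2 + c_2 \norm{w^k}^2_{P^k}$ for constants $c_1, c_2 > 0$. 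Selecting $\gamma$ small enough that $\tfrac{\gamma}{2} - O(\gamma^2)$ stays positive and that the Lyapunov weights satisfying the absorption inequalities exist then completes the descent $V^{k+1} \leq V^k - c\norm{g^k}^2$, and telescoping delivers the advertised $O(1/K)$ rate.
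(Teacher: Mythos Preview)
Your proposal is correct and follows essentially the same route as the paper: the same Lyapunov function $V^k = f(x^k,y^k) + \sum_d \beta_d \norm{\what^{k-d}}^2_{P^{k-d}}$, the same reduction $f^{k+1} \leq \hhat^k(v^k)$ via \eqref{as:lipschitz_type_gradient}--\eqref{as:hessian_apx_bound}, the same use of the triggering rule~\eqref{eq:lazy_reduced_grad_rule} (with the block-diagonality of $P^k$) to bound the lazy residual by historical $\norm{\what^{k-d}}^2$ terms, the same absorption via decreasing $\{\beta_d\}$, and the same final conversion from $(\norm{g_x^k}, \norm{w^k})$ to $\norm{g^k}$ through $w = g_y - C^\top A^{-1} g_x$.

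The only cosmetic differences are: (i) the paper uses the polarization identity on $-\gamma\inner{w^k}{\what^k}_{P^k}$ to obtain separate negative $\norm{w^k}^2_{P^k}$ and $\norm{\what^k}^2_{P^k}$ terms and then applies Young's inequality with a free parameter $\rho$, whereas you apply Young directly and later split $\norm{\what^k}^2 \leq 2\norm{w^k}^2 + 2\norm{\what^k - w^k}^2$; (ii) the paper telescopes first and converts to $\norm{g^k}^2$ only at the minimizing index, while you build the conversion into the per-iteration descent. One small imprecision: Assumption~\eqref{as:hessian_apx_bound} bounds the \emph{global} $M^k$, not each $M_i^k$, so the operator-norm bound should be stated for the aggregate $C^\top A^{-1} g_x$ (as the paper does, via $\norm{C} \leq L$ from a Schur-type factorization of $M \succeq 0$) rather than for individual $C_i$; since $\sum_i C_i^\top A_i^{-1} g_{ix} = C^\top A^{-1} g_x$ by block structure, this does not affect the argument.
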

\vspace{-0.5cm}
In Appendix~\ref{sec:proof_convergence}, we prove \cref{thm:convergence} and provide explicit parameter settings that guarantee convergence.
The established $O(1/K)$ convergence rate matches standard global convergence result in Riemannian optimization \cite{Boumal2018Convergence}.
While our convergence conditions involve additional parameters, experiments (\cref{sec:experiments}) show that \Algorithm is not sensitive to these parameters and converges under a wide range of values.


\section{Experimental Results}
\label{sec:experiments}

In this section, we evaluate \Algorithm on BA problems from benchmark collaborative SLAM and SfM datasets.
All algorithms are implemented in C++ using \software{g2o} \cite{g2o}, 
and experiments are conducted on a computer with Intel i7-7700K CPU and 16 GB RAM.
Unless otherwise mentioned, the default parameters we use for \Algorithm are summarized in
\cref{tab:default_parameters} in the appendix.
Our results show that \Algorithm converges under a wide range of parameter settings,
and compares favorably against existing methods while achieving up to 78\%
total communication reduction. 
In the rest of this section, we first perform ablation studies on the proposed lazy communication scheme (\cref{sec:experiments:ablation}).
Then, we present evaluation and comparison results on large-scale benchmark datasets (\cref{sec:experiments:slam} and \ref{sec:experiments:sfm}). 

\begin{table*}[ht]
\caption{\small
Evaluation on collaborative SLAM scenarios \cite{Burri16Euroc,Geiger12KITTI}.
Columns \textbf{N, \#IM, \#MP, \#Obs} denote the total number of agents, images (keyframes), map points, and observations, respectively.
\textbf{Init}: input to all algorithms.
\textbf{Ref}: reference solution from centralized optimization using \software{g2o} \cite{g2o}.
\textbf{PCG}: baseline distributed preconditioned conjugate gradient method \cite{Wu11MulticoreBA}.
\textbf{DR}: baseline Douglas-Rachford splitting method \cite{Eriksson16ConsensusBA}.
\textbf{\Algorithm}: proposed method ($\epsilon = 1$).
For each metric, the best-performing distributed method is highlighted in bold.}
\label{tab:cslam}
\resizebox{\textwidth}{!}{%
\begin{tabular}{|l|c|c|c|c|ccccc|ccccc|ccc|}
\hline
\multirow{2}{*}{Dataset} &
  \multirow{2}{*}{$N$} &
  \multirow{2}{*}{\#IM} &
  \multirow{2}{*}{\#MP} &
  \multirow{2}{*}{\#Obs} &
  \multicolumn{5}{c|}{Absolute Trajectory Error (ATE) {[}m{]}} &
  \multicolumn{5}{c|}{Mean Reprojection Error {[}px{]}} &
  \multicolumn{3}{c|}{Total Uploads {[}MB{]}} \\ \cline{6-18} 
 &
   &
   &
   &
   &
  \multicolumn{1}{c|}{Init} \rule{0pt}{8pt} &
  \multicolumn{1}{c|}{Ref} &
  \multicolumn{1}{c|}{PCG} &
  \multicolumn{1}{c|}{DR} &
  \textbf{LARPG} &
  \multicolumn{1}{c|}{Init} &
  \multicolumn{1}{c|}{Ref} &
  \multicolumn{1}{c|}{PCG} &
  \multicolumn{1}{c|}{DR} &
  \textbf{LARPG} &
  \multicolumn{1}{c|}{PCG} &
  \multicolumn{1}{c|}{DR} &
  \textbf{LARPG} \\ \hline
Vicon Room 1 &
  3 &
  464 &
  13K &
  121K &
  \multicolumn{1}{c|}{0.213} &
  \multicolumn{1}{c|}{0.127} &
  \multicolumn{1}{c|}{0.127} &
  \multicolumn{1}{c|}{0.127} &
  \textbf{0.126} &
  \multicolumn{1}{c|}{47.3} &
  \multicolumn{1}{c|}{1.38} &
  \multicolumn{1}{c|}{1.39} &
  \multicolumn{1}{c|}{1.40} &
  \textbf{1.38} &
  \multicolumn{1}{c|}{34} &
  \multicolumn{1}{c|}{26} &
  \textbf{11} \\ \hline
Vicon Room 2 &
  3 &
  631 &
  20K &
  176K &
  \multicolumn{1}{c|}{0.191} &
  \multicolumn{1}{c|}{0.087} &
  \multicolumn{1}{c|}{0.089} &
  \multicolumn{1}{c|}{\textbf{0.088}} &
  \textbf{0.088} &
  \multicolumn{1}{c|}{45.3} &
  \multicolumn{1}{c|}{1.42} &
  \multicolumn{1}{c|}{1.51} &
  \multicolumn{1}{c|}{1.46} &
  \textbf{1.43} &
  \multicolumn{1}{c|}{43} &
  \multicolumn{1}{c|}{32} &
  \textbf{14} \\ \hline
Machine Hall &
  5 &
  719 &
  19K &
  187K &
  \multicolumn{1}{c|}{0.297} &
  \multicolumn{1}{c|}{0.274} &
  \multicolumn{1}{c|}{0.253} &
  \multicolumn{1}{c|}{\textbf{0.215}} &
  0.232 &
  \multicolumn{1}{c|}{50.3} &
  \multicolumn{1}{c|}{1.38} &
  \multicolumn{1}{c|}{3.72} &
  \multicolumn{1}{c|}{\textbf{1.38}} &
  1.43 &
  \multicolumn{1}{c|}{61} &
  \multicolumn{1}{c|}{46} &
  \textbf{17} \\ \hline
KITTI 00 &
  10 &
  1699 &
  96K &
  553K &
  \multicolumn{1}{c|}{6.83} &
  \multicolumn{1}{c|}{5.88} &
  \multicolumn{1}{c|}{5.88} &
  \multicolumn{1}{c|}{\textbf{5.86}} &
  5.87 &
  \multicolumn{1}{c|}{133.1} &
  \multicolumn{1}{c|}{1.08} &
  \multicolumn{1}{c|}{1.49} &
  \multicolumn{1}{c|}{\textbf{1.09}} &
  1.10 &
  \multicolumn{1}{c|}{176} &
  \multicolumn{1}{c|}{133} &
  \textbf{71} \\ \hline
KITTI 06 &
  10 &
  422 &
  22K &
  120K &
  \multicolumn{1}{c|}{10.87} &
  \multicolumn{1}{c|}{10.32} &
  \multicolumn{1}{c|}{10.42} &
  \multicolumn{1}{c|}{\textbf{10.32}} &
  10.36 &
  \multicolumn{1}{c|}{107.9} &
  \multicolumn{1}{c|}{1.11} &
  \multicolumn{1}{c|}{\textbf{1.11}} &
  \multicolumn{1}{c|}{1.12} &
  \textbf{1.11} &
  \multicolumn{1}{c|}{44} &
  \multicolumn{1}{c|}{34} &
  \textbf{16} \\ \hline
\end{tabular}%
}
%
	\centering
	\caption{\small 
		Evaluation on collaborative SfM scenarios \cite{Wilson141dsfm}. 
		Each dataset is divided to simulate 50 agents.
		For \Algorithm, we set $\epsilon = 10$.
		All columns are named in the same way as \cref{tab:cslam}.
		For each metric, the best performing distributed method is highlighted in bold.
	}
	\label{tab:1dsfm}
	\resizebox{\textwidth}{!}{%
		\begin{tabular}{|l|c|c|c|ccccc|ccc|ccc|}
			\hline
			\multirow{2}{*}{Dataset} & \multirow{2}{*}{\#IM} & \multirow{2}{*}{\#MP} & \multirow{2}{*}{\#Obs} & \multicolumn{5}{c|}{Mean Reprojection Error [px]}                                                                                   & \multicolumn{3}{c|}{Total Uploads [MB]}                              & \multicolumn{3}{c|}{Average Local Iteration Time [ms]}              \\ \cline{5-15} 
			&                      &                      &                       & \multicolumn{1}{c|}{Init} \rule{0pt}{8pt}& \multicolumn{1}{c|}{Ref}  & \multicolumn{1}{c|}{PCG}           & \multicolumn{1}{c|}{DR}   & \textbf{LARPG}         & \multicolumn{1}{c|}{PCG}  & \multicolumn{1}{c|}{DR}   & \textbf{LARPG}        & \multicolumn{1}{c|}{PCG}         & \multicolumn{1}{c|}{DR}  & \textbf{LARPG} \\ \hline
			Alamo                    & 576                  & 138K                 & 813K                  & \multicolumn{1}{c|}{2.56} & \multicolumn{1}{c|}{1.39} & \multicolumn{1}{c|}{1.57}          & \multicolumn{1}{c|}{1.63} & \textbf{1.44} & \multicolumn{1}{c|}{989}  & \multicolumn{1}{c|}{745}  & \textbf{186} & \multicolumn{1}{c|}{\textbf{16}} & \multicolumn{1}{c|}{151} & 76    \\ \hline
			Ellis Island             & 234                  & 22K                  & 86K                   & \multicolumn{1}{c|}{5.30} & \multicolumn{1}{c|}{2.61} & \multicolumn{1}{c|}{5.04}          & \multicolumn{1}{c|}{4.07} & \textbf{3.24} & \multicolumn{1}{c|}{117}  & \multicolumn{1}{c|}{90}   & \textbf{22}  & \multicolumn{1}{c|}{\textbf{1}}  & \multicolumn{1}{c|}{8}   & 8     \\ \hline
			Gendarmenmarkt           & 704                  & 78K                  & 271K                  & \multicolumn{1}{c|}{4.34} & \multicolumn{1}{c|}{2.02} & \multicolumn{1}{c|}{2.96}          & \multicolumn{1}{c|}{2.67} & \textbf{2.23} & \multicolumn{1}{c|}{379}  & \multicolumn{1}{c|}{286}  & \textbf{73}  & \multicolumn{1}{c|}{\textbf{5}}  & \multicolumn{1}{c|}{35}  & 27    \\ \hline
			Madrid Metropolis        & 345                  & 45K                  & 198K                  & \multicolumn{1}{c|}{3.77} & \multicolumn{1}{c|}{1.28} & \multicolumn{1}{c|}{\textbf{1.48}} & \multicolumn{1}{c|}{1.87} & 1.49          & \multicolumn{1}{c|}{272}  & \multicolumn{1}{c|}{205}  & \textbf{56}  & \multicolumn{1}{c|}{\textbf{3}}  & \multicolumn{1}{c|}{23}  & 19    \\ \hline
			Montreal Notre Dame      & 459                  & 152K                 & 811K                  & \multicolumn{1}{c|}{3.05} & \multicolumn{1}{c|}{1.96} & \multicolumn{1}{c|}{\textbf{2.04}} & \multicolumn{1}{c|}{2.10} & 2.08          & \multicolumn{1}{c|}{1048} & \multicolumn{1}{c|}{790}  & \textbf{171} & \multicolumn{1}{c|}{\textbf{16}} & \multicolumn{1}{c|}{124} & 80    \\ \hline
			Notre Dame               & 548                  & 225K                 & 1180K                 & \multicolumn{1}{c|}{3.97} & \multicolumn{1}{c|}{2.18} & \multicolumn{1}{c|}{2.34}          & \multicolumn{1}{c|}{2.87} & \textbf{2.23} & \multicolumn{1}{c|}{1345} & \multicolumn{1}{c|}{1014} & \textbf{257} & \multicolumn{1}{c|}{\textbf{23}} & \multicolumn{1}{c|}{239} & 109   \\ \hline
			NYC Library              & 336                  & 54K                  & 210K                  & \multicolumn{1}{c|}{3.67} & \multicolumn{1}{c|}{1.72} & \multicolumn{1}{c|}{2.17}          & \multicolumn{1}{c|}{2.21} & \textbf{1.89} & \multicolumn{1}{c|}{294}  & \multicolumn{1}{c|}{222}  & \textbf{57}  & \multicolumn{1}{c|}{\textbf{4}}  & \multicolumn{1}{c|}{24}  & 21    \\ \hline
			Piazza del Popolo        & 336                  & 31K                  & 154K                  & \multicolumn{1}{c|}{4.63} & \multicolumn{1}{c|}{1.88} & \multicolumn{1}{c|}{2.54}          & \multicolumn{1}{c|}{2.33} & \textbf{2.20} & \multicolumn{1}{c|}{199}  & \multicolumn{1}{c|}{150}  & \textbf{38}  & \multicolumn{1}{c|}{\textbf{2}}  & \multicolumn{1}{c|}{14}  & 14    \\ \hline
			Piccadilly               & 2303                 & 185K                 & 797K                  & \multicolumn{1}{c|}{4.64} & \multicolumn{1}{c|}{2.11} & \multicolumn{1}{c|}{3.72}          & \multicolumn{1}{c|}{3.27} & \textbf{2.55} & \multicolumn{1}{c|}{972}  & \multicolumn{1}{c|}{733}  & \textbf{177} & \multicolumn{1}{c|}{\textbf{16}} & \multicolumn{1}{c|}{159} & 80    \\ \hline
			Roman Forum              & 1067                 & 227K                 & 1046K                 & \multicolumn{1}{c|}{4.20} & \multicolumn{1}{c|}{1.82} & \multicolumn{1}{c|}{2.14}          & \multicolumn{1}{c|}{2.79} & \textbf{1.90} & \multicolumn{1}{c|}{1400} & \multicolumn{1}{c|}{1056} & \textbf{279} & \multicolumn{1}{c|}{\textbf{21}} & \multicolumn{1}{c|}{221} & 108   \\ \hline
			Tower of London          & 484                  & 124K                 & 557K                  & \multicolumn{1}{c|}{5.14} & \multicolumn{1}{c|}{1.68} & \multicolumn{1}{c|}{4.48}          & \multicolumn{1}{c|}{2.61} & \textbf{2.50} & \multicolumn{1}{c|}{702}  & \multicolumn{1}{c|}{583}  & \textbf{127} & \multicolumn{1}{c|}{\textbf{12}} & \multicolumn{1}{c|}{101} & 57    \\ \hline
			Trafalgar                & 5067                 & 333K                 & 1286K                 & \multicolumn{1}{c|}{4.80} & \multicolumn{1}{c|}{2.11} & \multicolumn{1}{c|}{3.76}          & \multicolumn{1}{c|}{3.24} & \textbf{2.17} & \multicolumn{1}{c|}{1678} & \multicolumn{1}{c|}{1265} & \textbf{309} & \multicolumn{1}{c|}{\textbf{28}} & \multicolumn{1}{c|}{293} & 146   \\ \hline
			Union Square             & 816                  & 26K                  & 90K                   & \multicolumn{1}{c|}{6.77} & \multicolumn{1}{c|}{1.93} & \multicolumn{1}{c|}{3.71}          & \multicolumn{1}{c|}{3.32} & \textbf{2.91} & \multicolumn{1}{c|}{121}  & \multicolumn{1}{c|}{91}   & \textbf{22}  & \multicolumn{1}{c|}{\textbf{1}}  & \multicolumn{1}{c|}{8}   & 8     \\ \hline
			Vienna Cathedral         & 843                  & 157K                 & 504K                  & \multicolumn{1}{c|}{5.73} & \multicolumn{1}{c|}{1.88} & \multicolumn{1}{c|}{3.69}          & \multicolumn{1}{c|}{3.32} & \textbf{2.37} & \multicolumn{1}{c|}{723}  & \multicolumn{1}{c|}{545}  & \textbf{146} & \multicolumn{1}{c|}{\textbf{11}} & \multicolumn{1}{c|}{92}  & 55    \\ \hline
			Yorkminster              & 428                  & 101K                 & 377K                  & \multicolumn{1}{c|}{5.29} & \multicolumn{1}{c|}{2.02} & \multicolumn{1}{c|}{2.99}          & \multicolumn{1}{c|}{3.16} & \textbf{2.24} & \multicolumn{1}{c|}{542}  & \multicolumn{1}{c|}{409}  & \textbf{128} & \multicolumn{1}{c|}{\textbf{7}}  & \multicolumn{1}{c|}{59}  & 39    \\ \hline
		\end{tabular}%
	}
\end{table*}

\subsection{Evaluating Lazy Communication}
\label{sec:experiments:ablation}
We evaluate the proposed lazy communication scheme using the \dataset{Castle30} dataset \cite{Strecha08Benchmark}, which consists of 30 images observing a courtyard (\cref{fig:castle30:dataset}).
We use \software{Theia} \cite{TheiaSfM} to generate the input BA problem, which contains 23564 map points in total. 
We divide the BA problem into 30 agents and run \Algorithm for 50 iterations.
In this experiment, we find that it is sufficient to fix the preconditioner at the initial iteration, which corresponds to letting 
$\delta_p \to +\infty$ in \eqref{eq:lazy_jacobi_rule}.  
This is because for this relatively simple problem, the initial preconditioner already gives a good approximation of curvature information at all subsequent iterates.
Consequently, we mainly focus on evaluating parameters that affect the lazy communication of gradients \eqref{eq:lazy_reduced_grad_rule}.

We first evaluate the impact of $\epsilon_d$ in \eqref{eq:lazy_reduced_grad_rule}.
Intuitively, larger values of $\epsilon_d$ imply that agents are more tolerant of gradient approximation error, and hence communicate less at each iteration. 
We set all $\epsilon_d$ ($d = 1, \ldots, D$) to a common value $\epsilon$ and vary $\epsilon$ in our experiments.
To measure solution accuracy, we record the root-mean-square error (RMSE) of camera positions, computed after aligning with the ground truth via a similarity transformation.
\cref{fig:castle30:cam_error_vs_iter} shows the convergence of \Algorithm under varying values of $\epsilon$.
For comparison, we also include a reference solution computed by centralized optimization using \software{g2o}.
Except when using a very loose threshold of $\epsilon=100$ (red curve), 
lazy communication has minimal impact on the iterations of \Algorithm.
Furthermore, the communication efficiency of our method is clearly seen in \cref{fig:castle30:cam_error_vs_comm}, where we plot convergence as a function of total amount of uploads to the server.
To provide more insights, \cref{fig:castle30:lazy_comm_pattern}
visualizes the amount of gradient blocks uploaded to the server at each iteration. 
For each value of $\epsilon$, the corresponding solid line denotes the percentage of uploaded gradient blocks averaged across all agents, 
and the surrounding shaded area represents one standard deviation. 
Recall that choosing $\epsilon = 0$ forces all agents to upload all blocks at every iteration (blue curve in \cref{fig:castle30:lazy_comm_pattern}).
Our result clearly shows that varying $\epsilon$ provides an effective way to control the amount of uploads during optimization.

In addition, we also evaluate the impact of $\bar{d}$ on convergence.
Recall from \eqref{eq:lazy_reduced_grad_rule} that $\bar{d}$ determines the number of past gradients 
that are used to compute the triggering threshold. 
\cref{fig:castle30:vary_queue_size} shows the performance of \Algorithm under different choices of $\bar{d}$ with fixed $\epsilon = 5$. 
While the differences are not significant, our result still suggests that using more past gradients (\eg, $\bar{d} = 9$) helps to save more communication.

\subsection{Performance on Collaborative SLAM Datasets}
\label{sec:experiments:slam}
In this subsection, we evaluate \Algorithm on collaborative BA problems from multi-robot SLAM applications. 
We use the monocular version of \software{ORB-SLAM3}~\cite{ORBSLAM3} to extract BA problems from the \dataset{EuRoc} \cite{Burri16Euroc} and \dataset{KITTI} \cite{Geiger12KITTI} datasets.
Each \dataset{EuRoc} dataset contains multiple sequences recorded in the same indoor space,
and we use the multi-session feature of \software{ORB-SLAM3} to simulate each sequence as a single robot.
For each \dataset{KITTI} dataset, we divide the overall trajectory into multiple segments to simulate multiple robots.
We generate noisy inputs for each dataset by perturbing the \software{ORB-SLAM3} estimates by zero-mean Gaussian noise.\footnote{
	Specifically, the noise standard deviation for robot rotation, robot position, and map points are set to 5~deg, 0.1~m, 0.05~m for \textsf{EuRoc}, 
	and 5~deg, 2~m, 0.1~m for \textsf{KITTI}.
}

We compare \Algorithm against two baseline methods that can be implemented under the communication architecture considered in this work.
The first baseline is the method in \cite{Wu11MulticoreBA} using distributed preconditioned conjugate gradient (PCG). 
In our case, we use distributed PCG to solve the 
reduced second-order approximation in Lemma~\ref{lem:schur_complement}, 
where the problem is re-linearized after every 10 PCG iterations.
The second baseline is the Douglas-Rachford (DR) splitting method proposed in \cite{Eriksson16ConsensusBA}.
Similar to our method, both baseline methods only require agents to communicate information over the observed parts of the shared model (see \cref{remark:implementation}). 

\cref{tab:cslam} shows the performance of all algorithms after 50 iterations.
All results are averaged across 10 random runs.
We evaluate the RMSE absolute trajectory error (ATE) against ground truth, the mean reprojection error, as well as the total amount of uploads during optimization.
For comparison, we also include a reference solution computed by centralized optimization using \software{g2o}.
We note that the higher ATE in \dataset{KITTI} is due to the larger scale of the datasets.
As shown in \cref{tab:cslam}, \Algorithm achieves similar or better performance compared to baseline methods, while using significantly less communication.
Specifically, when compared to DR, \Algorithm achieves up to 65\% total communication reduction, clearly demonstrating the communication efficiency of our method.

\subsection{Performance on Collaborative SfM Datasets}
\label{sec:experiments:sfm}
We also evaluate \Algorithm on collaborative SfM scenarios using the \dataset{1DSfM} dataset \cite{Wilson141dsfm}, which contains 15 medium to large scale internet photo collections.
We use \software{Theia}~\cite{TheiaSfM} to generate the input BA problems.
Then, we partition each problem randomly to simulate a scenario with 50 agents.

Similar to the previous subsection, we evaluate the performance of all algorithms after 50 iterations.
\cref{tab:1dsfm} shows the results.
Since ground truth is not available, we only record the final mean reprojection error.
\Algorithm outperforms baseline methods in most datasets, and achieves final reprojection errors that are close to the centralized reference solutions.
Once again, \Algorithm demonstrates superior communication efficiency.
When compared to DR, our method achieves 68\%-78\% reduction in terms of total uploads.
Lastly, we also evaluate the average local iteration time of all methods (last three columns in \cref{tab:1dsfm}).
Our method is faster than DR, since the latter requires each agent to solve a smaller nonlinear optimization problem at every iteration.
On the other hand, the local iteration time of our method is larger than PCG.
However, considering the large size of the SfM datasets, an average iteration time ranging from $8$~ms to $293$~ms for our method is still reasonable, and can be improved by further optimizing our implementation (\eg, via additional parallelization).

\section{Conclusion}
We presented \Algorithm, a communication-efficient distributed algorithm for collaborative geometric estimation. 
Each iteration of \Algorithm allows agents to analytically eliminating private variables.
Furthermore, by incorporating lazy and partial aggregation at the server, \Algorithm achieves significant communication reduction, which makes it suitable for multi-robot and mixed reality applications subject to limited network bandwidth.
Under generic conditions, we proved that \Algorithm converges globally to first-order critical points with a sublinear convergence rate.
Evaluations on large-scale BA problems in collaborative SLAM and SfM scenarios 
show that \Algorithm performs competitively against existing techniques 
while achieving a consistent communication reduction of up to 78\%. 

\optional{
While our formulation is general, the current evaluation focuses on BA with the standard least squared costs.
In future work, we plan to evaluate on more applications and study the impact of robust cost functions.
In addition, it would be interesting to further improve the lazy communication scheme, \eg, by designing a single triggering condition that controls the communication of both preconditioners and gradients.
}

{\small
	\bibliographystyle{IEEEtran}
	\bibliography{ref/misc,ref/optimization,ref/sfm,ref/slam}
}

\clearpage
\onecolumn
\begin{appendices}
\section{Proofs}
\label{sec:proof}

\subsection{Forming the Reduced Second-Order Approximation}
\label{sec:proof_formulation}
\begin{proof}[Proof of Lemma~\ref{lem:schur_complement}]
	Recall from \eqref{eq:global_quadratic_model} that the global second-order approximation $\mhat(u,v)$ involving both private vectors $u \in T_x \Xcal$ and shared vector $v \in T_y \Ycal$ is defined as,
	\begin{equation}
		\small
		\mhat(u,v) = 
		f(x,y) + 
		\bigg \langle
		{
			\bmat
			g_x \\ g_y
			\emat},
		\bmat
		u \\ v
		\emat
		\bigg \rangle
		+ \frac{1}{2}
		\bigg \langle
		\bmat
		u \\
		v
		\emat,
		{
			\bmat
			A        & C \\
			{C}^\top & B
			\emat}
		\bmat
		u \\
		v
		\emat
		\bigg \rangle.
		\label{pf:eq:global_quadratic_model}
	\end{equation}
	Setting the gradient of $\mhat(u,v)$ with respect to $u$ to zero yields,
	\begin{equation}
		\nabla_u \mhat(u,v) = g_x + Au + Cv = 0 \implies \ustar(v) = -A^{-1}(Cv + g_x).
		\label{pf:eq:ustar_def}
	\end{equation}
	Recall the definition of $A$ and $C$ in \eqref{eq:ABC_def}.
	In particular, since $A$ is a block-diagonal matrix, the $i$-th component of $\ustar(v)$
	(corresponding to agent $i$) is given by,
	\begin{equation}
		\ustar_i(v) = -A_i^{-1}(C_i v + g_{ix}), \;\; \forall i \in [N].
	\end{equation}
	Next, substitute $\ustar(v)$ defined in \eqref{pf:eq:ustar_def} into \eqref{pf:eq:global_quadratic_model}. After collecting terms, we obtain,
	\begin{align}
		\hhat(v) \triangleq \mhat(\ustar(v), v) = 
		f(x,y) - \frac{1}{2}\inner{g_x}{A^{-1}g_x} 
		+ \inner{
			\underbrace{g_y - C^\top A^{-1}g_x}_{w}
		}{v}
		+ \frac{1}{2} \inner{v}{
			\underbrace{\left( B - C^\top A^{-1} C \right)}_{S}
			v}.
		\label{pf:eq:reduced_model}
	\end{align}
	Consider the vector $w$ as defined in \eqref{pf:eq:reduced_model}.
	Note that the global Riemannian gradient with respect to $y$ 
	satisfies $g_y = \sum_{i=1}^N g_{iy}$ where $g_{iy} \triangleq \rgrad_y f_i(x_i, y)$.
	In addition, because of the block-diagonal structure of $A$ in \eqref{eq:ABC_def},
	$C^\top A^{-1}g_x = \sum_{i=1}^N C_i^\top A_i^{-1} g_{ix}$.
	Combining these results, we have that $w = \sum_{i=1}^N w_i$ where $w_i$ is defined as in \eqref{eq:w_def}.
	Similarly, for the matrix $S$ defined in \eqref{pf:eq:reduced_model}, it can be readily verified that $S = \sum_{i=1}^N S_i$ where $S_i$ is defined as in \eqref{eq:S_def}. 
\end{proof}


\subsection{Convergence Proofs for \cref{alg:lazy}}
\label{sec:proof_convergence}
We start by reviewing several notations that are needed in this section.
We use the superscript $k$ to denote the value of a variable at iteration $k$ of \cref{alg:lazy}.
For example, $g^k_x$ denotes the value of the Riemannian gradient $g_x$ 
introduced in \eqref{eq:global_quadratic_model} at iteration $k$.
Recall that $\norm{\cdot}$ (without subscript) denotes the standard norm 
associated with the Riemannian metric.
We also introduce an additional notation to simplify our presentation.
Recall that $P^k: T_{y^k} \Ycal \to T_{y^k} \Ycal$ is the preconditioner used at iteration $k$ to update the shared variable.
In the following, we use 
$\inner{\cdot}{\cdot}_k$ and 
$\norm{\cdot}_k$ as shortcuts for 
$\inner{\cdot}{\cdot}_{P^k}$ and
$\norm{\cdot}_{P^k}$, \ie,
\begin{align}
 \inner{v_1}{v_2}_k 	&\triangleq \inner{v_1}{v_2}_{P^k} = \inner{v_1}{P^k v_2}, \;\;
	\forall v_1, v_2 \in T_{y^k} \Ycal, \\
 \norm{v}_k 	&\triangleq \norm{v}_{P^k} = \sqrt{\inner{v}{P^k v}}, \;\; \forall v \in T_{y^k} \Ycal.
\end{align}

In order to establish the convergence of \cref{alg:lazy}, we start by analyzing the change in the global objective \eqref{eq:problem_formulation_objective} after a single iteration (one step change). 
The following lemma provides an upper bound on the change in objective value.
\begin{lemma}
	Under Assumption~\ref{as:assumptions},
	each iteration of \cref{alg:lazy} satisfies,
	\begin{equation}
		f(x^{k+1}, y^{k+1}) - f(x^k, y^k) \leq 
		- \frac{1}{2L} \norm{g^k_x}^2
		- \frac{\gamma}{2} \norm{w^k}^2_k
		- \left( \frac{\gamma}{2} - \frac{\sigmabar \gamma^2}{2} \right) \norm{\what^k}^2_k
		+ \frac{\gamma}{2} \norm{\sum_{i=1}^N (\what^k_i - w^k_i)}^2_k.
		\label{eq:pre_descent_lemma_lazy}
	\end{equation}
	\label{lem:pre_descent_lemma_lazy}
\end{lemma}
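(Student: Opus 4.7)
}

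The plan is to reduce the one-step change in $f$ to the change in the global quadratic model $\mhat^k$, and then exploit the reduced-form expression from Lemma~\ref{lem:schur_complement} together with the block-Jacobi update $v^k = -\gamma P^k \what^k$. First I would apply the Lipschitz-type pullback bound \eqref{eq:lipschitz_type_gradient} at $(u^k, v^k) = (u^\star(v^k), v^k)$ to get
\begin{equation*}
f(x^{k+1}, y^{k+1}) - f^k \leq \left\langle \bmat g^k_x \\ g^k_y \emat, \bmat u^k \\ v^k \emat \right\rangle + \tfrac{c_g}{2}\norm{\bmat u^k \\ v^k \emat}^2 .
\end{equation*}
Since $M^k \succeq \mu I \succeq c_g I$ by \eqref{as:hessian_apx_bound}, the quadratic form $\tfrac{c_g}{2}\norm{[u^k;v^k]}^2$ is dominated by $\tfrac{1}{2}\langle [u^k;v^k], M^k [u^k;v^k]\rangle$, and the right-hand side collapses to $\mhat^k(u^k, v^k) - f^k$.

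Next I would invoke Lemma~\ref{lem:schur_complement}: because $u^k = u^\star(v^k)$, we have $\mhat^k(u^k, v^k) = \hhat^k(v^k)$ and therefore
\begin{equation*}
\mhat^k(u^k, v^k) - f^k = -\tfrac{1}{2}\inner{g^k_x}{(A^k)^{-1} g^k_x} + \inner{w^k}{v^k} + \tfrac{1}{2}\inner{v^k}{S^k v^k} .
\end{equation*}
I would then substitute $v^k = -\gamma P^k \what^k$ and treat the three resulting terms separately. The first term is bounded using $A^k \preceq L I$ (which follows from \eqref{as:hessian_apx_bound} applied blockwise), giving $-\tfrac{1}{2}\inner{g^k_x}{(A^k)^{-1} g^k_x} \leq -\tfrac{1}{2L}\norm{g^k_x}^2$.

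For the cross term, write $\inner{w^k}{v^k} = -\gamma \inner{w^k}{\what^k}_k$ and use the polarization identity $-\gamma\inner{w^k}{\what^k}_k = \tfrac{\gamma}{2}\norm{\what^k - w^k}^2_k - \tfrac{\gamma}{2}\norm{w^k}^2_k - \tfrac{\gamma}{2}\norm{\what^k}^2_k$; noting $\what^k - w^k = \sum_{i=1}^N (\what^k_i - w^k_i)$ produces exactly the remainder term in \eqref{eq:pre_descent_lemma_lazy} together with the $-\tfrac{\gamma}{2}\norm{w^k}^2_k$ and a first contribution of $-\tfrac{\gamma}{2}\norm{\what^k}^2_k$. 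For the pure quadratic term I would rewrite $\tfrac{1}{2}\inner{v^k}{S^k v^k} = \tfrac{\gamma^2}{2}\inner{\what^k}{S^k P^k \what^k}_k$ and bound it via Cauchy--Schwarz in the $P^k$-inner product using \eqref{as:preconditioner}: $\inner{\what^k}{S^k P^k \what^k}_k \leq \norm{\what^k}_k \norm{S^k P^k \what^k}_k \leq \sigmabar \norm{\what^k}^2_k$. Summing these bounds gives \eqref{eq:pre_descent_lemma_lazy}.

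The main technical obstacle I anticipate is the clean reduction from \eqref{eq:lipschitz_type_gradient} to the quadratic-model decrease $\mhat^k(u^k, v^k) - f^k$, which hinges on comparing $c_g$ with the lower curvature constant $\mu$ of $M^k$; the inequality $L > \mu \geq c_g > 0$ stated in Assumption~\ref{as:assumptions} is precisely what makes this step go through. Aside from this, the remaining work is careful algebra: tracking the $P^k$-inner product throughout, correctly identifying the block $A^k$ as the private-variable diagonal of $M^k$ so that its operator-norm bound follows from \eqref{as:hessian_apx_bound}, and using \eqref{as:preconditioner} via Cauchy--Schwarz to control the $S^k P^k$ term.
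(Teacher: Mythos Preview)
Your proposal is correct and follows essentially the same route as the paper: both reduce $f(x^{k+1},y^{k+1})-f^k$ to $\hhat^k(v^k)-f^k$ via the pullback bound \eqref{as:lipschitz_type_gradient} together with $M^k\succeq c_g I$, then substitute $v^k=-\gamma P^k\what^k$, bound the $(A^k)^{-1}$ term using $A^k\preceq LI$, split the cross term $-\gamma\inner{w^k}{\what^k}_k$ by polarization, and control the $S^kP^k$ term by Cauchy--Schwarz in the $P^k$-inner product with \eqref{as:preconditioner}. The only cosmetic differences are the order of steps (the paper establishes $f(x^{k+1},y^{k+1})\leq\hhat^k(v^k)$ at the end rather than the start) and that the paper reaches the polarization identity in two stages rather than applying it directly.
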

\begin{proof}
	Let $\mhat^k(\cdot)$ and $\hhat^k(\cdot)$ denote the second-order approximation in \eqref{eq:global_quadratic_model} and reduced second-order approximation in \eqref{eq:reduced_global_quadratic_model} at iteration $k$, respectively. 
	Substitute $v^k = -\gamma P^k \what^k$ into $\hhat^k(\cdot)$:
	\begin{align}
		\hhat^k(v^k) = f(x^k,y^k) 
		- \frac{1}{2} \inner{g^k_x}{(A^k)^{-1} g^k_x} 
		- \gamma \inner{w^k}{P^k\what^k} 
		+ \frac{\gamma^2}{2} \inner{P^k\what^k}{S^kP^k\what^k}.
		\label{eq:pre_descent_lemma_lazy_step_1}
	\end{align}
	In \eqref{eq:pre_descent_lemma_lazy_step_1}, recall that $w^k = \sum_{i=1}^N w^k_i$ is the true global gradient computed using latest local gradients from all agents. 
	On the other hand, $\what^k = \sum_{i=1}^N \what^k_i$ 
	is the approximate global gradient computed using the lazily uploaded gradients from agents.
	Rearrange the third term in the right hand side of \eqref{eq:pre_descent_lemma_lazy_step_1} as, 
	\begin{align}
		- \gamma \inner{w^k}{P^k\what^k} &= - \gamma \inner{w^k}{P^k(w^k + \what^k - w^k)} \\
		&= - \gamma \norm{w^k}^2_k - \gamma \inner{w^k}{P^k(\what^k - w^k)} \\
		&= - \gamma \norm{w^k}^2_k - \gamma \inner{w^k}{P^k\sum_{i=1}^N (\what^k_i - w^k_i)} \\
		&= - \gamma \norm{w^k}^2_k + \inner{-\sqrt{\gamma P^k}  w^k}{\sqrt{\gamma P^k} \sum_{i=1}^N (\what^k_i - w^k_i)}.
		\label{eq:pre_descent_lemma_lazy_step_2}
	\end{align}
	In \eqref{eq:pre_descent_lemma_lazy_step_2}, the matrix $\gamma P^k$ is positive definite,
	and we use $\sqrt{\gamma P^k}$ to denote its matrix square root.
	Recall the equality $\inner{a}{b} = \frac{1}{2}\twonorm{a}^2 + \frac{1}{2} \twonorm{b}^2 - \frac{1}{2} \twonorm{a-b}^2$. 
	Applying this equality on the inner product term in \eqref{eq:pre_descent_lemma_lazy_step_2}, we obtain,
	\begin{align}
		\inner{-\sqrt{\gamma P^k}  w^k}{\sqrt{\gamma P^k} \sum_{i=1}^N (\what^k_i - w^k_i)}
		= \frac{\gamma}{2} \norm{w^k}^2_k
		+ \frac{\gamma}{2} \norm{\sum_{i=1}^N (\what^k_i - w^k_i)}^2_k
		- \frac{\gamma}{2} \norm{\what^k}^2_k.
		\label{eq:pre_descent_lemma_lazy_step_3}
	\end{align}
	Substitute \eqref{eq:pre_descent_lemma_lazy_step_3} into \eqref{eq:pre_descent_lemma_lazy_step_2}, we obtain, 
	\begin{align}
		- \gamma \inner{w^k}{P^k\what^k} &= 
		- \frac{\gamma}{2} \norm{w^k}^2_k 
		- \frac{\gamma}{2} \norm{\what^k}^2_k 
		+ \frac{\gamma}{2} \norm{\sum_{i=1}^N (\what^k_i - w^k_i)}^2_k.
		\label{eq:pre_descent_lemma_lazy_step_4}
	\end{align}
	Now, let us focus on the last term in \eqref{eq:pre_descent_lemma_lazy_step_1}. 
	Applying the Cauchy-Schwartz inequality with respect to the norm induced by $P^k$, it holds that, 
	\begin{align}
		\frac{\gamma^2}{2} \inner{P^k\what^k}{S^kP^k\what^k} 
		&= \frac{\gamma^2}{2} \inner{\what^k}{S^kP^k\what^k}_k \\
		&\leq \frac{\gamma^2}{2} \norm{\what^k}_k \norm{S^kP^k\what^k}_k \\
		&\leq \frac{\sigmabar \gamma^2}{2} \norm{\what^k}^2_k.
		\label{eq:pre_descent_lemma_lazy_step_5}
	\end{align}
	The last inequality \eqref{eq:pre_descent_lemma_lazy_step_5} holds because of assumption \eqref{as:preconditioner}.
	Finally, substitute \eqref{eq:pre_descent_lemma_lazy_step_4} and \eqref{eq:pre_descent_lemma_lazy_step_5} into \eqref{eq:pre_descent_lemma_lazy_step_1}, we obtain that, 
	\begin{align}
		\hhat^k(v^k) &\leq 
		f(x^k,y^k) 
		- \frac{1}{2} \inner{g^k_x}{(A^k)^{-1} g^k_x} 
		- \frac{\gamma}{2} \norm{w^k}^2_k
		- \left( \frac{\gamma}{2} - \frac{\sigmabar \gamma^2}{2} \right) \norm{\what^k}^2_k
		+ \frac{\gamma}{2} \norm{\sum_{i=1}^N (\what^k_i - w^k_i)}^2_k \\
		&\leq 
		f(x^k,y^k) 
		- \frac{1}{2L} \norm{g^k_x}^2
		- \frac{\gamma}{2} \norm{w^k}^2_k
		- \left( \frac{\gamma}{2} - \frac{\sigmabar \gamma^2}{2} \right) \norm{\what^k}^2_k
		+ \frac{\gamma}{2} \norm{\sum_{i=1}^N (\what^k_i - w^k_i)}^2_k,
	\end{align}
	where the last inequality holds, because $A^k \preceq L I$ due to assumption~\eqref{as:hessian_apx_bound}.
	To conclude the proof, note that \eqref{as:lipschitz_type_gradient} and \eqref{as:hessian_apx_bound} together imply that the model function $\mhat^k$ is an upper bound on the current pullback function $\fhat^k$,
	\begin{align}
		\fhat^k(u,v) 
		&\leq f(x^k, y^k) 
		+ \left \langle
		\bmat
		g^k_x \\ g^k_y
		\emat,
		\bmat
		u \\ v
		\emat
		\right \rangle
		+ 
		\frac{c_g}{2}
		\norm{\bmat
		u \\ v
		\emat}^2 \\
		& \leq f(x^k, y^k) 
		+ \left \langle
		\bmat
		g^k_x \\ g^k_y
		\emat,
		\bmat
		u \\ v
		\emat
		\right \rangle
		+ 
		\frac{1}{2}
		\bigg \langle
		\bmat
		u \\
		v
		\emat,
		M^k
		\bmat
		u \\
		v
		\emat
		\bigg \rangle \\
		& \triangleq
		\mhat^k(u,v).
	\end{align}
	Above, the first inequality holds due to the Lipschitz-type gradient conditions for pullback \eqref{as:lipschitz_type_gradient},
	and the second inequality holds because $M^k \succeq c_g I$ \eqref{as:hessian_apx_bound}.
	The above inequality directly shows that,
	\begin{align}
		f(x^{k+1}, y^{k+1}) = \fhat^k(u^k, v^k) 
		\leq \mhat^k(u^k, v^k) 
		= \mhat^k(\ustar(v^k), v^k) 
		= \hhat^k(v^k).
	\end{align}
	This concludes the proof.
\end{proof}

In Lemma~\ref{lem:pre_descent_lemma_lazy}, the RHS of \eqref{eq:pre_descent_lemma_lazy}
bounds the absolute reduction in the global cost function after each iteration of \cref{alg:lazy}.
However, due to the last term in \eqref{eq:pre_descent_lemma_lazy} (which captures the error between the approximate gradient $\what^k$ and true $w^k$), the RHS can in general be positive. 
This means that we cannot directly use \eqref{eq:pre_descent_lemma_lazy} to show that 
\cref{alg:lazy} decreases the global cost function at every iteration, \ie, \cref{alg:lazy} is not a descent method with respect to the cost function $f$.
Fortunately, we can still show that \cref{alg:lazy} is a descent method
with respect to a \emph{Lyapunov function}, which is sufficient for proving convergence.
This proof technique is inspired by LAG~\cite{Chen18LAG}.
Specifically, we define the Lyapunov function to be the sum of global objective and the squared norms of past approximate gradients,
\begin{equation}
V^k \triangleq f(x^k, y^k) + \sum_{d=1}^{\bar{d}} \beta_d \norm{\what^{k-d}}^2_{k-d},
\label{eq:lyapunov}
\end{equation}
where $\{\beta_d \geq 0, d = 1, \hdots, \bar{d}\}$ are constants to be specified.
Note that $V^k$ combines the current cost function with weighted squared norms of past approximate reduced gradients.
Intuitively, these squared norms account for the approximation errors induced by lazy communication, and allows us to establish the convergence of the proposed method.

\begin{lemma}[Descent lemma]
	Under Assumption~\ref{as:assumptions}, there exist suitable choices of parameters $\gamma, \{\beta_d\}, \{\epsilon_d\}$, such that
	each iteration of Algorithm~\ref{alg:lazy} satisfies,
	\begin{equation}
	V^{k+1} - V^k \leq
	- \frac{1}{2L} \norm{g^k_x}^2
	- \alpha_0 \norm{w^k}^2_k 
	- \sum_{d=1}^{\bar{d}} \alpha_d \norm{\what^{k-d}}^2_{k-d}
	\leq 0,
	\label{eq:descent_lemma_lazy}
	\end{equation}
	where $\alpha_0, \hdots, \alpha_{\bar{d}} > 0$ are fixed constants.
	In particular, the following provides a set of admissible conditions on the parameters such that \eqref{eq:descent_lemma_lazy} holds,
	\begin{align}
	    0       &< \gamma < 1/\sigma_p, \label{eq:descent_lemma_lazy:stepsize} \\
	    \beta_1 &= (\gamma - \sigmabar \gamma^2)/2, \\
	    \beta_d &< \beta_{d-1} - \gamma \epsilon_{d-1}/2, \; d = 2, \hdots, \bar{d}, \\
	    \beta_{\bar{d}} &> \gamma \epsilon_{\bar{d}} /2. \label{eq:descent_lemma_lazy:beta_dbar}
	\end{align}
	\label{lem:descent_lemma_lazy}
\end{lemma}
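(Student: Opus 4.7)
The plan is to start from the pre-descent bound in Lemma~\ref{lem:pre_descent_lemma_lazy} and then account for the extra terms introduced by the Lyapunov function $V^k$ defined in \eqref{eq:lyapunov}. Computing $V^{k+1} - V^k$ amounts to combining the one-step objective decrease from the pre-descent lemma with the telescoping change in $\sum_{d=1}^{\bar{d}} \beta_d \|\what^{k-d}\|_{k-d}^2$. After re-indexing this sum (replacing the dummy index $d$ by $d-1$ in the $V^{k+1}$ term), the change decomposes into a boundary contribution $\beta_1 \|\what^k\|_k^2 - \beta_{\bar{d}} \|\what^{k-\bar{d}}\|_{k-\bar{d}}^2$ plus an interior telescoping sum $\sum_{d=1}^{\bar{d}-1} (\beta_{d+1} - \beta_d) \|\what^{k-d}\|_{k-d}^2$, which I then merge with the right-hand side of \eqref{eq:pre_descent_lemma_lazy}.

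The crucial technical step, which I expect to be the main obstacle, is to control the global gradient approximation error $\|\sum_{i=1}^N (\what^k_i - w^k_i)\|_k^2$ appearing in \eqref{eq:pre_descent_lemma_lazy} using the block-level triggering rule \eqref{eq:lazy_reduced_grad_rule}. For each agent $i$ and each block $l$, either the block was uploaded (so $\what^k_{i,l} = w^k_{i,l}$ and the block error vanishes), or the triggering condition \eqref{eq:lazy_reduced_grad_rule} was violated, in which case the block error is bounded by $\tfrac{1}{mN^2}\sum_{d=1}^{\bar{d}} \epsilon_d \|\what^{k-d}\|_{k-d}^2$. Summing over the $m$ blocks of a single agent yields $\|\what^k_i - w^k_i\|_k^2 \leq \tfrac{1}{N^2}\sum_d \epsilon_d \|\what^{k-d}\|_{k-d}^2$, and then applying the generic Cauchy--Schwarz bound $\|\sum_i a_i\|^2 \leq N \sum_i \|a_i\|^2$ across the $N$ agents yields
\[
\bigg\|\sum_{i=1}^N (\what^k_i - w^k_i)\bigg\|_k^2 \leq \sum_{d=1}^{\bar{d}} \epsilon_d \|\what^{k-d}\|_{k-d}^2.
\]
The $\tfrac{1}{mN^2}$ normalization built into \eqref{eq:lazy_reduced_grad_rule} is chosen precisely so that the $m$ and $N$ factors incurred during this block-to-global aggregation cancel, producing a clean bound in the same functional form as the Lyapunov tail.

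Substituting this inequality into the pre-descent bound and combining with the Lyapunov telescoping yields, for each squared-norm term, a coefficient that is affine in the $\{\beta_d\}$ and $\{\epsilon_d\}$: the coefficient of $\|\what^k\|_k^2$ equals $\beta_1 - (\gamma - \sigmabar \gamma^2)/2$; the coefficient of $\|\what^{k-d}\|_{k-d}^2$ for $1 \leq d \leq \bar{d}-1$ equals $\beta_{d+1} - \beta_d + \gamma \epsilon_d/2$; and the coefficient of $\|\what^{k-\bar{d}}\|_{k-\bar{d}}^2$ equals $\gamma \epsilon_{\bar{d}}/2 - \beta_{\bar{d}}$. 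It then remains to verify that the admissible parameter choices \eqref{eq:descent_lemma_lazy:stepsize}--\eqref{eq:descent_lemma_lazy:beta_dbar} render all these coefficients non-positive. Requiring $\gamma < 1/\sigmabar$ makes $\beta_1 = (\gamma - \sigmabar \gamma^2)/2$ strictly positive, after which a strictly decreasing sequence $\beta_1 > \beta_2 > \cdots > \beta_{\bar{d}} > \gamma \epsilon_{\bar{d}}/2$ exists whenever the cumulative slack $\gamma \sum_d \epsilon_d /2$ does not exhaust $\beta_1$, which implicitly upper-bounds the admissible $\epsilon_d$. Reading off the strict gaps in these inequalities gives $\alpha_0 = \gamma/2$ together with strictly positive $\alpha_1, \ldots, \alpha_{\bar{d}}$, completing the descent inequality \eqref{eq:descent_lemma_lazy}.
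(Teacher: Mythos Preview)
Your proposal is correct and follows the same overall skeleton as the paper: start from the one-step bound of Lemma~\ref{lem:pre_descent_lemma_lazy}, telescope the Lyapunov tail, and control $\bigl\|\sum_i(\what^k_i-w^k_i)\bigr\|_k^2$ via the block-level triggering rule \eqref{eq:lazy_reduced_grad_rule} exactly as you describe (block-sum then $N$-fold Cauchy--Schwarz, with the $1/(mN^2)$ factor engineered to cancel).

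The one noteworthy difference is that the paper inserts an intermediate Young's inequality step, writing $\|\what^k\|_k^2 \leq (1+\rho)\|w^k\|_k^2 + (1+\rho^{-1})\bigl\|\sum_i(\what^k_i-w^k_i)\bigr\|_k^2$ with a free parameter $\rho>0$, and carries $\rho$ through the coefficient computation before specializing. Once $\beta_1 = (\gamma-\sigmabar\gamma^2)/2$ is imposed, every $\rho$-dependent factor multiplies $\beta_1-\gamma/2+\sigmabar\gamma^2/2=0$ and vanishes, so the $\rho$ detour contributes nothing to the final admissible conditions \eqref{eq:descent_lemma_lazy:stepsize}--\eqref{eq:descent_lemma_lazy:beta_dbar}. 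Your direct substitution is therefore a legitimate shortcut that yields the identical constants $\alpha_0=\gamma/2$ and $\alpha_d=\beta_d-\beta_{d+1}-\gamma\epsilon_d/2$ (with $\beta_{\bar d+1}\equiv 0$). The paper's version would only buy extra generality if one wanted to allow $\beta_1\neq(\gamma-\sigmabar\gamma^2)/2$, which the lemma statement does not.
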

\begin{proof}
	Consider the difference between the Lyapunov function between the current and next iterations,
	\begin{align}
	V^{k+1} - V^k = f(x^{k+1}, y^{k+1}) - f(x^k, y^k) + 
	\sum_{d=1}^{\bar{d}} \beta_d \norm{\what^{k-d+1}}^2_{k-d+1} - \sum_{d=1}^{\bar{d}} \beta_d \norm{\what^{k-d}}^2_{k-d}.
	\end{align}
	Using Lemma~\ref{lem:pre_descent_lemma_lazy}, we can obtain the following upper bound,
	\begin{equation}
	\begin{aligned}
	V^{k+1} - V^k \leq
	&- \frac{1}{2L} \norm{g^k_x}^2
	- \frac{\gamma}{2} \norm{w^k}^2_k
	- \left( \frac{\gamma}{2} - \frac{\sigmabar \gamma^2}{2} \right) \norm{\what^k}^2_k
	+ \frac{\gamma}{2} \norm{\sum_{i=1}^N (\what^k_i - w^k_i)}^2_k \\
	&+ \sum_{d=1}^{\bar{d}} \beta_d \norm{\what^{k-d+1}}^2_{k-d+1} - \sum_{d=1}^{\bar{d}} \beta_d \norm{\what^{k-d}}^2_{k-d}.
	\end{aligned}
	\end{equation}
	Grouping terms that involve $\norm{\what^k}^2_k$ together, we obtain,
	\begin{equation}
	\begin{aligned}
	V^{k+1} - V^k \leq
	&- \frac{1}{2L} \norm{g^k_x}^2
	- \frac{\gamma}{2} \norm{w^k}^2_k
	+ \left( \beta_1 - \frac{\gamma}{2} + \frac{\sigmabar \gamma^2}{2} \right) \norm{\what^k}^2_k 
	+ \frac{\gamma}{2} \norm{\sum_{i=1}^N (\what^k_i - w^k_i)}^2_k \\
	&+ \sum_{d=2}^{\bar{d}} \beta_d \norm{\what^{k-d+1}}^2_{k-d+1} - \sum_{d=1}^{\bar{d}} \beta_d \norm{\what^{k-d}}^2_{k-d}.
	\end{aligned}
	\label{eq:descent_lemma_step_1}
	\end{equation}
	Next, we obtain an upper bound for $\norm{\what^k}^2_k$. First, using the definition of the approximate gradient $\what^k$, it holds that,
	\begin{align}
	\norm{\what^k}^2_k = \norm{w^k + \sum_{i=1}^N (\what^k_i - w^k_i) }^2_k 
	= \norm{\sqrt{P^k} w^k + \sqrt{P^k} \sum_{i=1}^N (\what^k_i - w^k_i)}^2.
	\end{align}
	Next, applying Young's inequality, we arrive at the following upper bound, 
	\begin{align}
	\norm{\what^k}^2_k 
	&\leq (1+\rho)\norm{\sqrt{P^k} w^k}^2 + (1+\rho^{-1})\norm{\sqrt{P^k} \sum_{i=1}^N (\what^k_i - w^k_i)}^2 \\
	&= (1+\rho)\norm{w^k}^2_k + (1+\rho^{-1})\norm{\sum_{i=1}^N (\what^k_i - w^k_i)}^2_k,
	\label{eq:descent_lemma_step_2}
	\end{align}
	where $\rho > 0$ is any constant.
	Plug \eqref{eq:descent_lemma_step_2} into \eqref{eq:descent_lemma_step_1}. After grouping terms, we arrive at,
	\begin{equation}
	\begin{aligned}
	V^{k+1} - V^k \leq
	&- \frac{1}{2L} \norm{g^k_x}^2
	+ \left[\left(\beta_1 - \frac{\gamma}{2} + \frac{\sigmabar \gamma^2}{2}\right) (1+\rho) -\frac{\gamma}{2} \right] \norm{w^k}^2_k \\
	&+ \left[\left(\beta_1 - \frac{\gamma}{2} + \frac{\sigmabar \gamma^2}{2}\right) (1+\rho^{-1}) 
	+ \frac{\gamma}{2} \right] \norm{\sum_{i=1}^N (\what^k_i - w^k_i)}^2_k \\
	&+ \sum_{d=1}^{\bar{d}-1} (\beta_{d+1} - \beta_d) \norm{\what^{k-d}}^2_{k-d} - \beta_{\bar{d}} \norm{\what^{k-\bar{d}}}^2_{k-\bar{d}}.
	\end{aligned}
	\label{eq:descent_lemma_step_3}
	\end{equation}
	Next, we obtain an upper bound for the second row of \eqref{eq:descent_lemma_step_3}. Note that, 
	\begin{align}
	\norm{\sum_{i=1}^N (\what^k_i - w^k_i)}^2_k 
	&= \norm{\sum_{i=1}^N \sqrt{P^k} (\what^k_i - w^k_i)}^2 \\
	&\leq N \sum_{i=1}^N \norm{\sqrt{P^k} (\what^k_i - w^k_i)}^2 \\
	&=    N \sum_{i=1}^N \norm{\what^k_i - w^k_i}^2_k.
	\label{eq:descent_lemma_step_4}
	\end{align}
	Recall the communication triggering condition \eqref{eq:lazy_reduced_grad_rule}. By definition, \eqref{eq:lazy_reduced_grad_rule} guarantees that the approximation error for each block $l$ is upper bounded as follows,
	\begin{equation}
		\norm{\what^k_{il} - w^k_{il}}^2_{P^k_l} \leq \frac{1}{m N^2} \sum_{d=1}^{\bar{d}} \epsilon_d \norm{\what^{k-d}}^2_{k-d}.
	\end{equation}
	Summing the above inequality over all $m$ blocks, we can obtain an upper bound on the approximation error for the entire local gradient,
	\begin{equation}
		\norm{\what^k_{i} - w^k_{i}}^2_{k} = 
		\sum_{l=1}^{m} \norm{\what^k_{il} - w^k_{il}}^2_{P^k_l} 
		\leq \frac{1}{N^2} \sum_{d=1}^{\bar{d}} \epsilon_d \norm{\what^{k-d}}^2_{k-d}.
		\label{eq:descent_lemma_local_upper_bound}
	\end{equation}
	Substitute \eqref{eq:descent_lemma_local_upper_bound} into \eqref{eq:descent_lemma_step_4},
	\begin{align}
	\norm{\sum_{i=1}^N (\what^k_i - w^k_i)}^2_k
	&\leq N \sum_{i=1}^N \left( \frac{1}{N^2} \sum_{d=1}^{\bar{d}} \epsilon_d \norm{\what^{k-d}}^2_{k-d} \right) \\
	&=\sum_{d=1}^{\bar{d}} \epsilon_d \norm{\what^{k-d}}^2_{k-d}.
	\label{eq:descent_lemma_step_5}
	\end{align}
	Substitute \eqref{eq:descent_lemma_step_5} into the second row of \eqref{eq:descent_lemma_step_3},
	\begin{equation}
	\begin{aligned}
	V^{k+1} - V^k \leq
	&- \frac{1}{2L} \norm{g^k_x}^2
	+ \left[\left(\beta_1 - \frac{\gamma}{2} + \frac{\sigmabar \gamma^2}{2}\right) (1+\rho) -\frac{\gamma}{2} \right] \norm{w^k}^2_k \\
	&+ \left[\left(\beta_1 - \frac{\gamma}{2} + \frac{\sigmabar \gamma^2}{2} \right) (1+\rho^{-1}) 
	+ \frac{\gamma}{2} \right] \sum_{d=1}^{\bar{d}} \epsilon_d \norm{\what^{k-d}}^2_{k-d} \\
	&+ \sum_{d=1}^{\bar{d}-1} (\beta_{d+1} - \beta_d) \norm{\what^{k-d}}^2_{k-d} - \beta_{\bar{d}} \norm{\what^{k-\bar{d}}}^2_{k-\bar{d}}.
	\end{aligned}
	\label{eq:descent_lemma_step_6}
	\end{equation}
	After grouping terms in \eqref{eq:descent_lemma_step_6}, we arrive at,
	\begin{equation}
	\begin{aligned}
	V^{k+1} - V^k \leq
	&- \frac{1}{2L} \norm{g^k_x}^2
	- \left[ \frac{\gamma}{2} - \left( \beta_1 - \frac{\gamma}{2} + \frac{\sigmabar \gamma^2}{2}  \right) (1+\rho) \right] \norm{w^k}^2_{k} \\
	&- 
	\sum_{d=1}^{\bar{d}-1} \left \{ 
	\beta_d -
	\beta_{d+1} - 
	\epsilon_d \left[\left(\beta_1 - \frac{\gamma}{2} + \frac{\sigmabar \gamma^2}{2}  \right) (1+\rho^{-1}) 
	+ \frac{\gamma}{2} \right]  \right \} \norm{\what^{k-d}}^2_{k-d} \\
	&-
	\left \{ \beta_{\bar{d}}-
	\epsilon_{\bar{d}} \left[\left( \beta_1 - \frac{\gamma}{2} + \frac{\sigmabar \gamma^2}{2} \right) (1+\rho^{-1}) 
	+ \frac{\gamma}{2} \right]  \right \} \norm{\what^{k-\bar{d}}}^2_{k-\bar{d}}.
	\end{aligned}
	\end{equation}
	Define the following constants that correspond to the coefficients in the above inequality,
	\begin{align}
	\alpha_0 &\triangleq \frac{\gamma}{2} - \left( \beta_1 - \frac{\gamma}{2} + \frac{\sigmabar \gamma^2}{2}  \right) (1+\rho), \\
	\alpha_d &\triangleq \beta_d -
	\beta_{d+1} - 
	\epsilon_d \left[\left( \beta_1 - \frac{\gamma}{2} + \frac{\sigmabar \gamma^2}{2}  \right) (1+\rho^{-1}) 
	+ \frac{\gamma}{2} \right], \; d = 1, \hdots, \bar{d}-1, \\
	\alpha_{\bar{d}} &\triangleq \beta_{\bar{d}}-
	\epsilon_{\bar{d}} \left[\left( \beta_1 - \frac{\gamma}{2} + \frac{\sigmabar \gamma^2}{2}  \right) (1+\rho^{-1}) 
	+ \frac{\gamma}{2} \right].
	\end{align}
	For the Lyapunov function to be decreasing, it suffices to choose $\gamma, \{\epsilon_d\}, \{\beta_d\}$ such that $\alpha_d > 0$ for all $d = 0, 1, \hdots, \bar{d}$. 
	To conclude the proof, we show that the conditions outlined in \eqref{eq:descent_lemma_lazy:stepsize}-\eqref{eq:descent_lemma_lazy:beta_dbar}, which are inspired by similar conditions in \cite{Chen18LAG}, indeed 
	satisfy this requirement.
	Let us assume that $\{\beta_d\}$ is a decreasing sequence, \ie,
	$\beta_1 > \beta_2 > \hdots > \beta_{\bar{d}}$.
	This assumption makes intuitive sense, since it assigns larger weights to more recent gradients in the definition of the Lyapunov function \eqref{eq:lyapunov}.
	In addition, let us also assume that the stepsize satisfies $0 < \gamma < 1/\sigmabar$.
	Note the similarity of this assumption with the condition $0 < \gamma < 1/L$ that is commonly used to ensure the convergence of gradient descent (\eg, see \cite{Boumal2018Convergence}).
	Under these two simplifications, 
	let us choose $\beta_1 = (\gamma - \sigmabar \gamma^2)/2 > 0$, so that 
	$\beta_1 - \frac{\gamma}{2} + \frac{\sigmabar \gamma^2}{2} = 0$. 
	Then, it can be verified that the following conditions ensure
	$\alpha_d > 0$ for all $d = 0, 1, \hdots, \bar{d}$:
	\begin{align}
    	& \beta_d - \beta_{d+1} -\epsilon_d \gamma/2 > 0, \; d = 1, \hdots, \bar{d}-1,\\
    	& \beta_{\bar{d}} - \epsilon_{\bar{d}} \gamma/2 > 0.
	\end{align}
	We can verify that the above conditions are equivalent to \eqref{eq:descent_lemma_lazy:stepsize}-\eqref{eq:descent_lemma_lazy:beta_dbar}.
\end{proof}

\begin{remark}[Intuitions behind parameter settings]
\normalfont
Before proceeding, let us provide more insights on the choice of algorithm parameters \eqref{eq:descent_lemma_lazy:stepsize}-\eqref{eq:descent_lemma_lazy:beta_dbar} outlined in Lemma~\ref{lem:descent_lemma_lazy}.
For this purpose, let us focus on the special case when $\bar{d} = 1$, \ie, only a single past gradient is used in the calculation of the communication triggering condition \eqref{eq:lazy_reduced_grad_rule}.
In this case, it can be shown that the conditions \eqref{eq:descent_lemma_lazy:stepsize}-\eqref{eq:descent_lemma_lazy:beta_dbar} reduce to the following,
\begin{align}
    0       &< \gamma < 1/\sigma_p, \\
    \beta_1 &= (\gamma - \sigmabar \gamma^2)/2, \\
    \epsilon_1 &< 2\beta_1/\gamma = 1 - \sigma_p \gamma.
\end{align}
In particular, the last inequality demonstrates the intuitive \emph{trade-off} between the stepsize $\gamma$ and the lazy communication threshold $\epsilon_1$: 
\emph{with smaller stepsize, we can tolerate larger approximation errors (and hence save more communication) at each iteration}.
\end{remark}

With Lemma~\ref{lem:descent_lemma_lazy}, we are ready to prove Theorem~\ref{thm:convergence} which is stated in \cref{sec:convergence} and is repeated below.

\myParagraph{Theorem~\ref{thm:convergence}}
Under Assumption~\ref{as:assumptions} and the conditions in Lemma~\ref{lem:descent_lemma_lazy}, 
after $K$ iterations, the iterates generated by \cref{alg:lazy} satisfy,
\begin{equation}
\min_{k \in [K]} \norm{\rgrad f(x^k, y^k)}^2 = O(1/K). 
\end{equation}

\begin{proof}
	Using Lemma~\ref{lem:descent_lemma_lazy} we know that, 
	\begin{equation}
		\frac{1}{2L}\norm{g^k_x}^2
		+ \alpha_0 \norm{w^k}^2_k 
		\leq 	V^k - V^{k+1} 
	\end{equation}
	Furthermore, from Assumption~\eqref{as:preconditioner}, 
	\begin{equation}
		\frac{1}{2L}\norm{g^k_x}^2
		+ \alpha_0 \mu_p \norm{w^k}^2 
		\leq 	V^k - V^{k+1} 
	\end{equation}
	Define $\alpha \triangleq \min(1/2L, \alpha_0 \mu_p)$,
	\begin{equation}
	\alpha 
	\left (\norm{g^k_x}^2
	+ \norm{w^k}^2 \right )
	\leq 	V^k - V^{k+1} 
	\label{eq:convergence_step_1}
	\end{equation}
	A telescoping sum of \eqref{eq:convergence_step_1} from $k=1$ to $k=K$ yields,
	\begin{equation}
		\alpha \sum_{k=1}^K 
		\left( \norm{g^k_x}^2
		+ \norm{w^k}^2 \right) 
		\leq 
		V^1 - V^{K+1}
		\leq V^1 - f^\star.
		\label{eq:convergence_step_2}
	\end{equation}
	Above, $f^\star$ denotes the global minimum of Problem~\ref{problem_formulation}.
	The second inequality holds, because by definition of the Lyapunov function \eqref{eq:lyapunov} we have 
	$V^k \geq f^\star$ for all $k$.
	Inequality \eqref{eq:convergence_step_2} further implies that,
	\begin{equation}
		\min_{k \in [K]} \left( \norm{g^k_x}^2 + \norm{w^k}^2 \right) \leq \frac{V^1 - f^\star}{\alpha K}.
		\label{eq:convergence_step_3}
	\end{equation}
	To conclude the proof, we show that \eqref{eq:convergence_step_3} implies \eqref{eq:convergence} in Theorem~\ref{thm:convergence}. 
	From now on, let $k$ denote the iteration that minimizes the LHS of \eqref{eq:convergence_step_3}.
	In addition, define $\varepsilon \triangleq \sqrt{(V^1 - f^\star)/(\alpha K)}$.
	Then, \eqref{eq:convergence_step_3} implies that we have both $\norm{g_x^k} \leq \varepsilon$
	and $\norm{w^k} \leq \varepsilon$. 
	Recall from \eqref{pf:eq:reduced_model} that $w = g_y - C^\top A^{-1} g_x$.
	From \eqref{as:hessian_apx_bound} we have that $\norm{A^{-1}} \leq \mu^{-1}$.
	Also, since the approximate Hessian $M$ \eqref{eq:global_quadratic_model} is positive definite,
	\begin{equation}
		M = \bmat
		A      & C \\
		C^\top & B
		\emat \succeq 0,
	\end{equation}
	it holds that $C = A^{1/2} Z B^{1/2}$ where $\norm{Z} \leq 1$ \cite[Lemma~3.5.12]{Horn91Book}. 
	Therefore we also have that $\norm{C} \leq L$.
	Applying these results together with the triangle inequality,
	\begin{equation}
		\norm{g^k_y} \leq \norm{w^k} + \norm{C^\top A^{-1} g^k_x} 
		\leq \left( 1 + \frac{L}{\mu} \right) \varepsilon. 
	\end{equation}
	Lastly, for $g^k \triangleq \rgrad f(x^k, y^k)$,
	it holds that 
	\begin{equation}
		\norm{g^k}^2 = \norm{g^k_x}^2 + \norm{g^k_y}^2 
		\leq \left [  1 + \left( 1 + \frac{L}{\mu} \right)^2 \right ] \varepsilon^2
		=  \left [  1 + \left( 1 + \frac{L}{\mu} \right)^2 \right ] \frac{V^1 - f^\star}{\alpha K}.
	\end{equation}
	The proof is completed.
\end{proof}

\section{Additional Tables and Figures}
\label{sec:additional}
\begin{table}[h]
	\setlength{\tabcolsep}{1.5pt}
	\renewcommand{\arraystretch}{1.5}
	\centering
	\caption{\small Default parameters of \Algorithm used in our experiments (\cref{sec:experiments}).}
	\begin{tabular}{|c|c|l|}
		\hline
		Parameter    & Value  & \multicolumn{1}{c|}{Description}                           \\ \hline
		$\gamma$     & 1.0    & Stepsize used to update shared variable \eqref{eq:server_update}.                 \\ \hline
		$\lambda$    & $10^6$ & Regularization parameter in LM.               \\ \hline
		$\epsilon_d$ & 10     & Parameter used to compute the lazy communication threshold \eqref{eq:lazy_reduced_grad_rule}. \\ \hline
		$\bar{d}$ & 10 & Number of past gradients considered when computing lazy communication threshold \eqref{eq:lazy_reduced_grad_rule}. \\ \hline
		$\delta_p$   & 0.1    & Parameter used for updating the lazy Jacobi preconditioner \eqref{eq:lazy_jacobi_rule}. \\ \hline
	\end{tabular}
	\label{tab:default_parameters}
\end{table}

\begin{figure}[h]
	\centering
	\begin{subfigure}[t]{0.45\linewidth}
		\includegraphics[width=\textwidth, trim=0 200 0 250, clip]{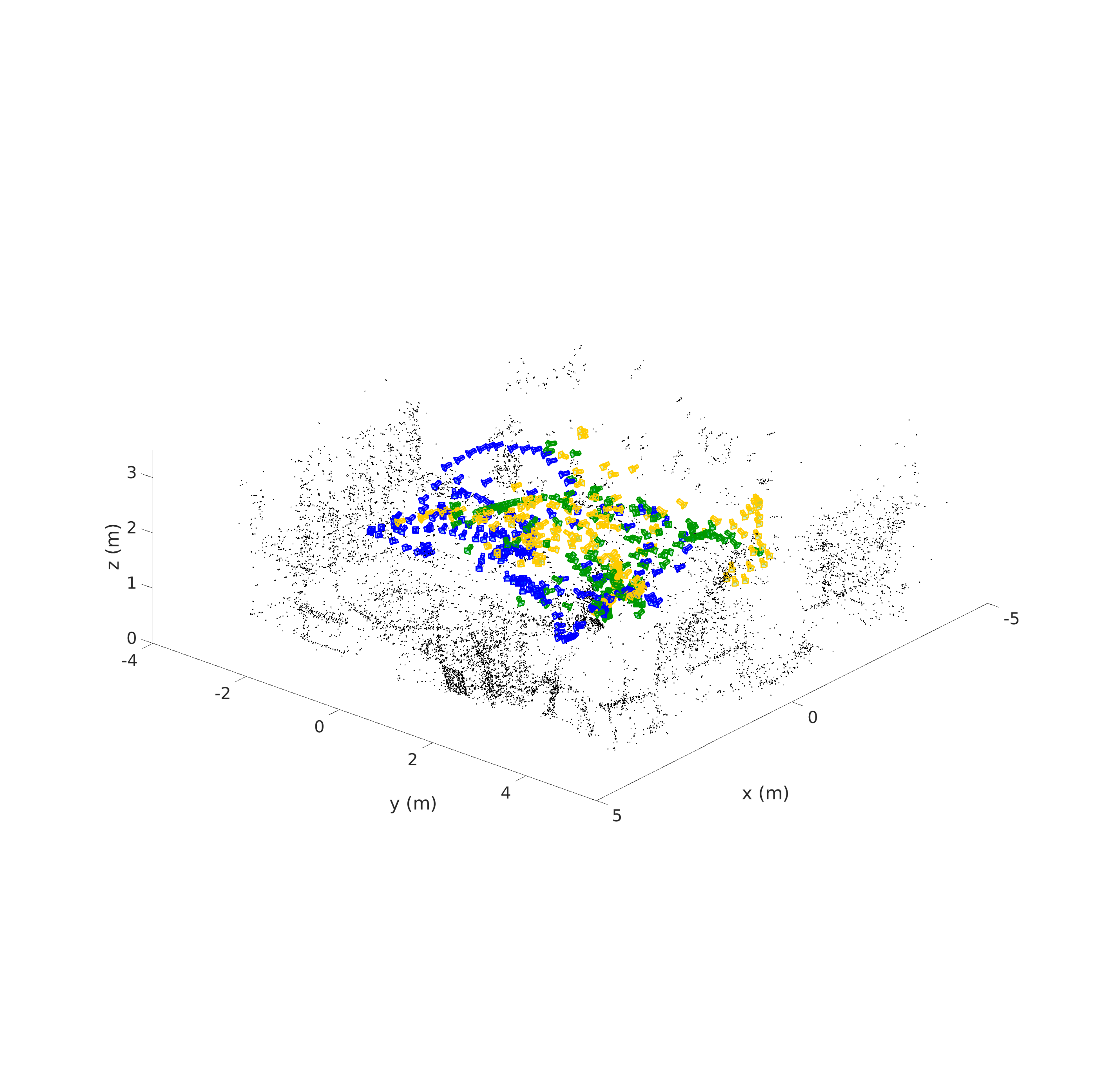}
		\caption{\dataset{Vicon Room 1}}
		\label{fig:cslam:vicon_room_1}
	\end{subfigure}
	~
	\begin{subfigure}[t]{0.45\linewidth}
		\includegraphics[width=\textwidth, trim=0 200 0 250, clip]{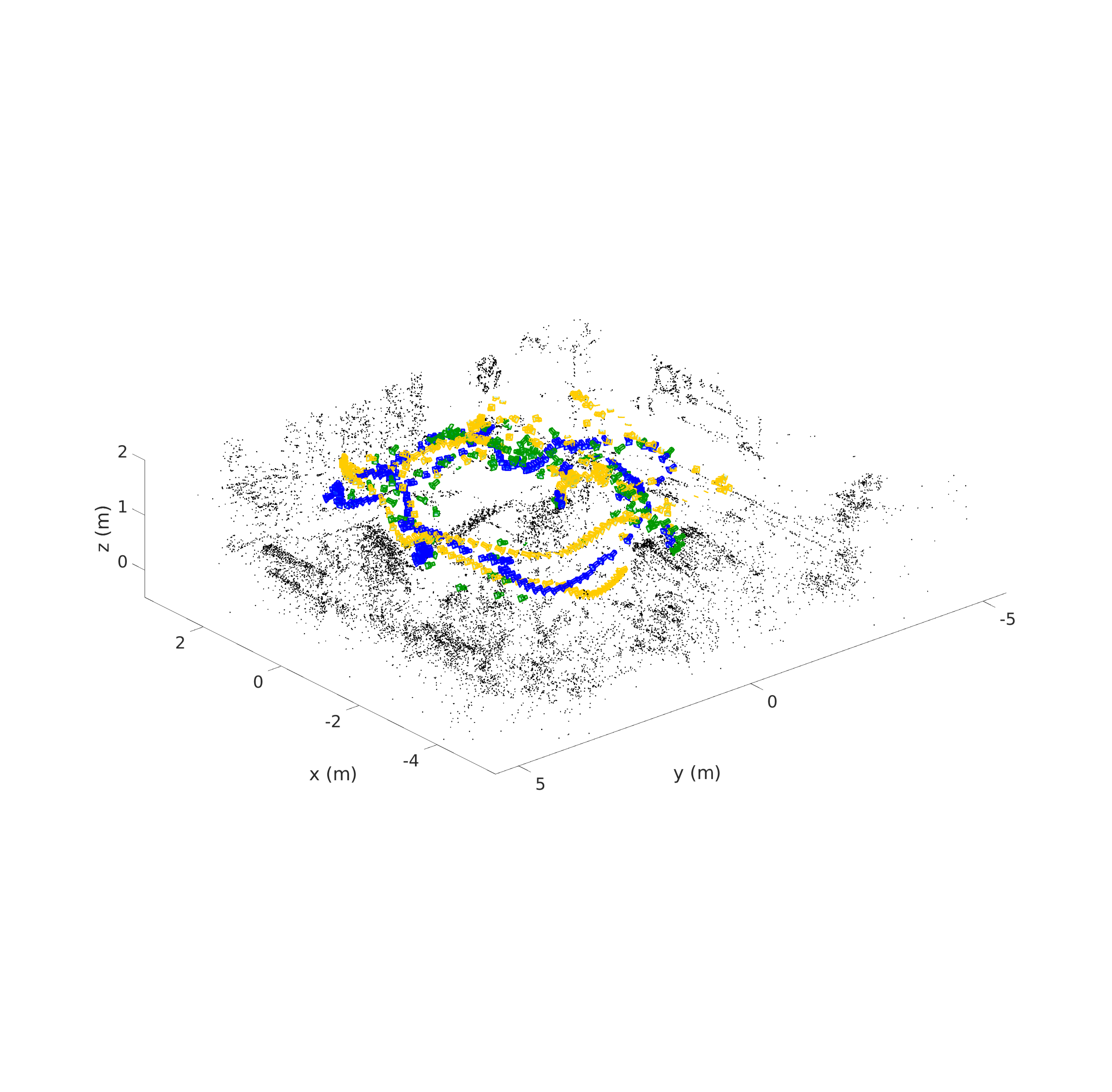}
		\caption{\dataset{Vicon Room 2}}
		\label{fig:cslam:vicon_room_2}
	\end{subfigure}	
	\\
	\begin{subfigure}[t]{0.45\linewidth}
		\includegraphics[width=\textwidth, trim=0 100 0 100, clip]{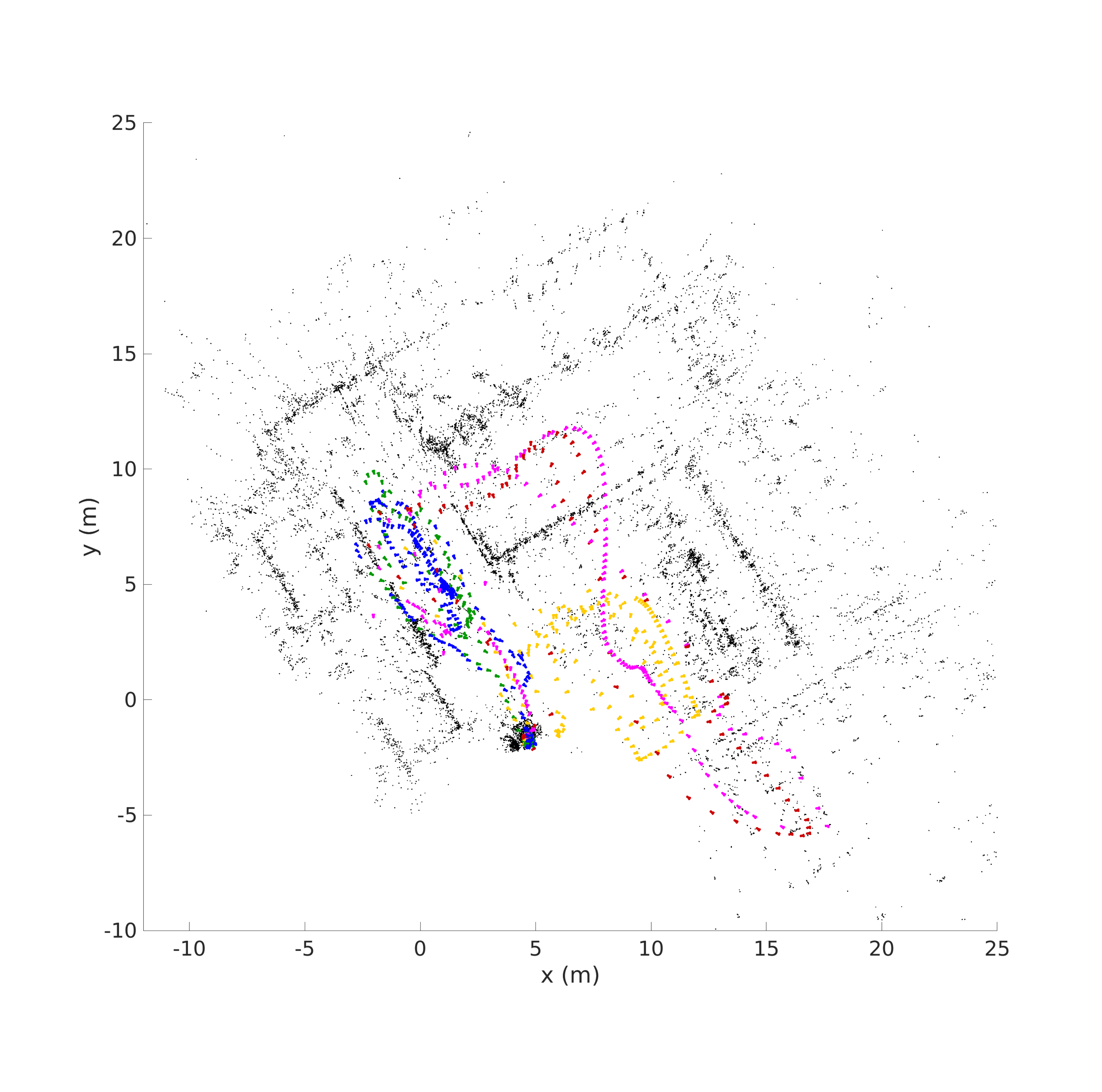}
		\caption{\dataset{Machine Hall}}
		\label{fig:cslam:machine_hall}
	\end{subfigure}
	~
	\begin{subfigure}[t]{0.45\linewidth}
		\includegraphics[width=\textwidth, trim=0 100 0 100, clip]{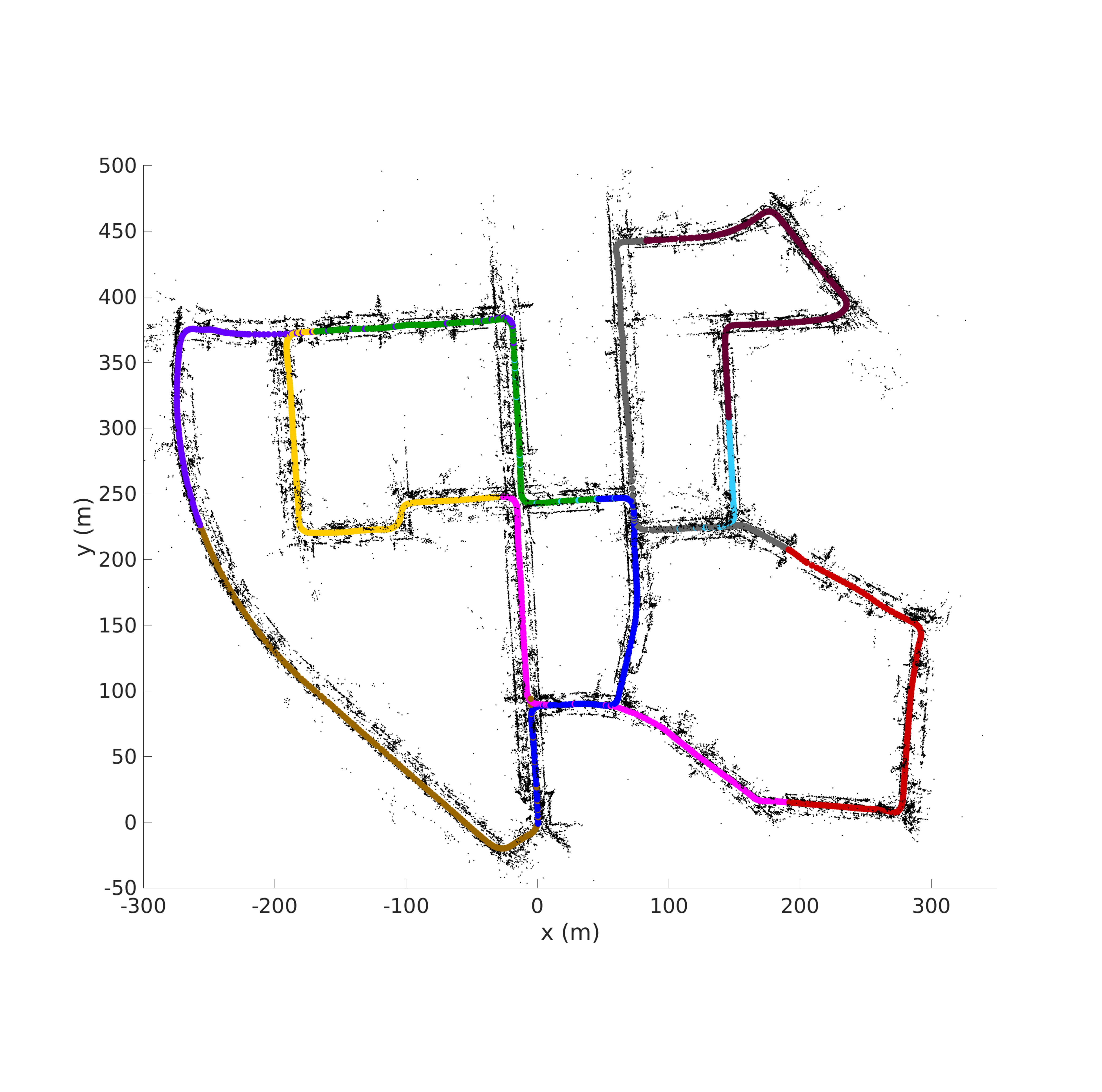}
		\caption{\dataset{KITTI~00}}
		\label{fig:cslam:kitti_00}
	\end{subfigure}
	\\
	\begin{subfigure}[t]{0.60\linewidth}
		\includegraphics[width=\textwidth, trim=0 600 0 300, clip]{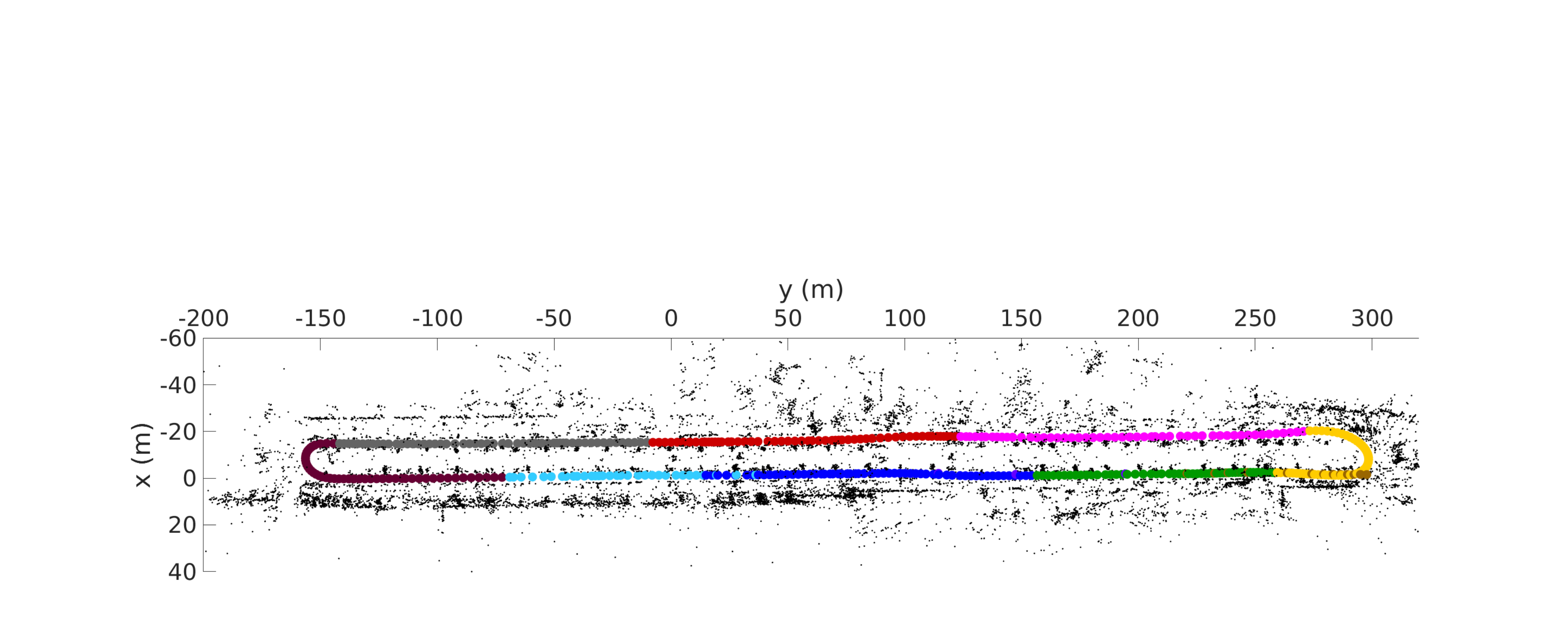}
		\caption{\dataset{KITTI~06}}
		\label{fig:cslam:kitti_06}
	\end{subfigure}
	\caption{
		Visualization of BA problems in collaborative SLAM scenarios \cite{Burri16Euroc,Geiger12KITTI}, generated using \software{ORB-SLAM3} \cite{ORBSLAM3}.
		Map points are shown in black.
		Poses of each simulated agent are shown in a distinct color.
	}
	\label{fig:cslam}
\end{figure}

\begin{figure}[h]
	\centering
	\begin{subfigure}[t]{0.45\linewidth}
		\includegraphics[width=\textwidth, trim=250 100 250 100, clip]{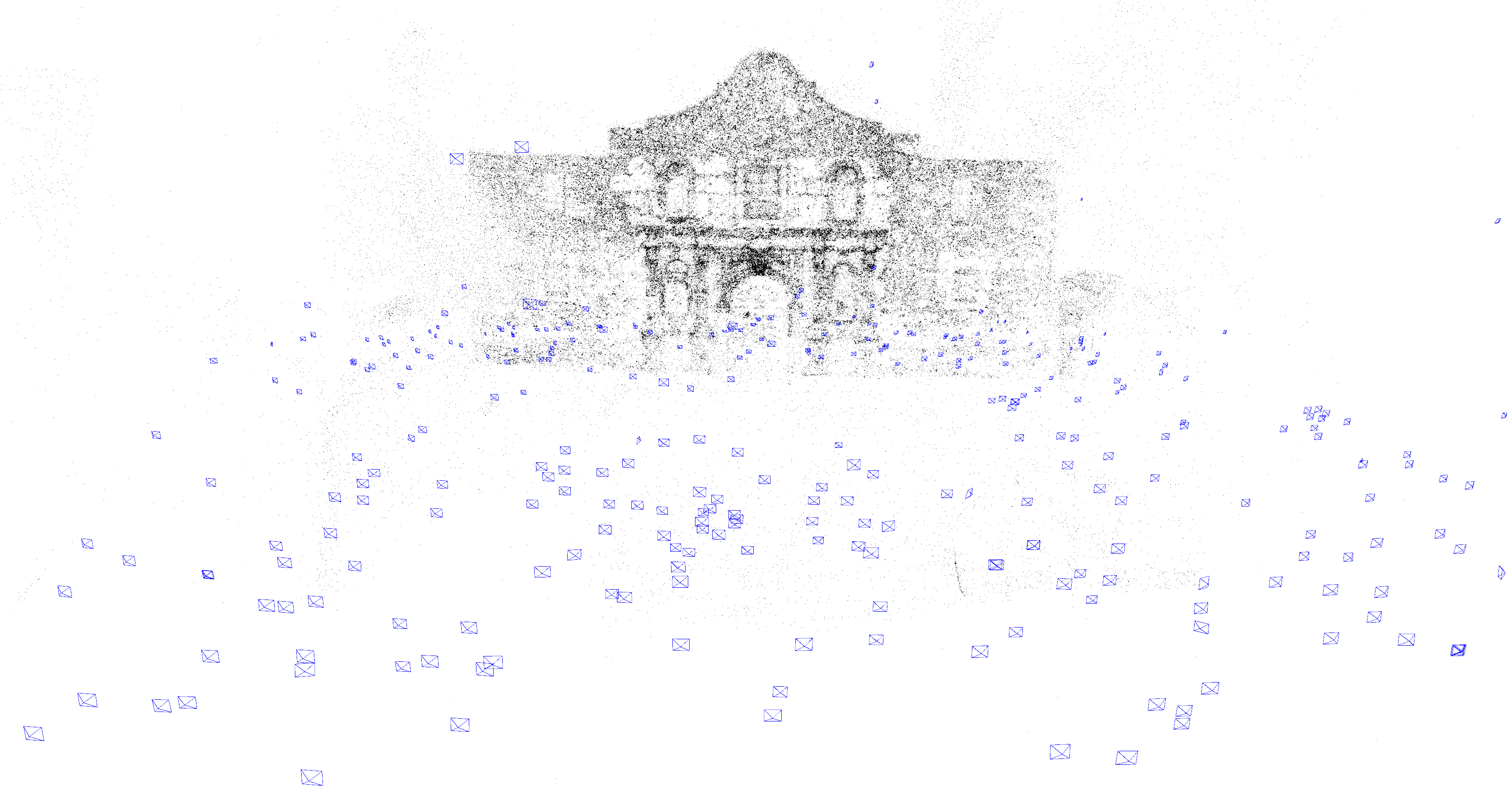}
		\caption{\dataset{Alamo}}
		\label{fig:1dsfm:alamo}
	\end{subfigure}
	~
	\begin{subfigure}[t]{0.45\linewidth}
		\includegraphics[width=\textwidth]{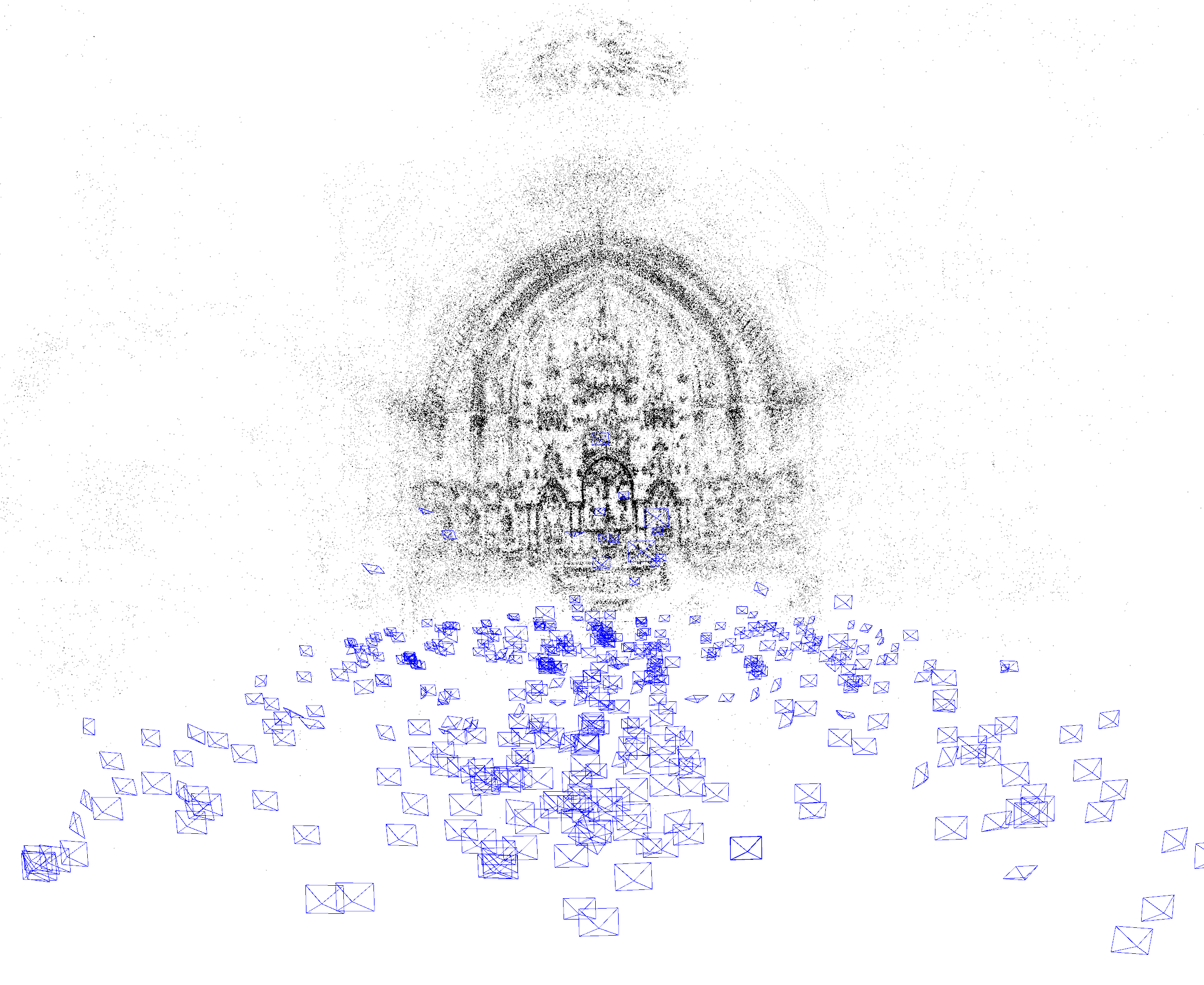}
		\caption{\dataset{Montreal Notre Dame}}
		\label{fig:1dsfm:montreal_notre_dame}
	\end{subfigure}	
	\\
	\begin{subfigure}[t]{0.45\linewidth}
		\includegraphics[width=\textwidth]{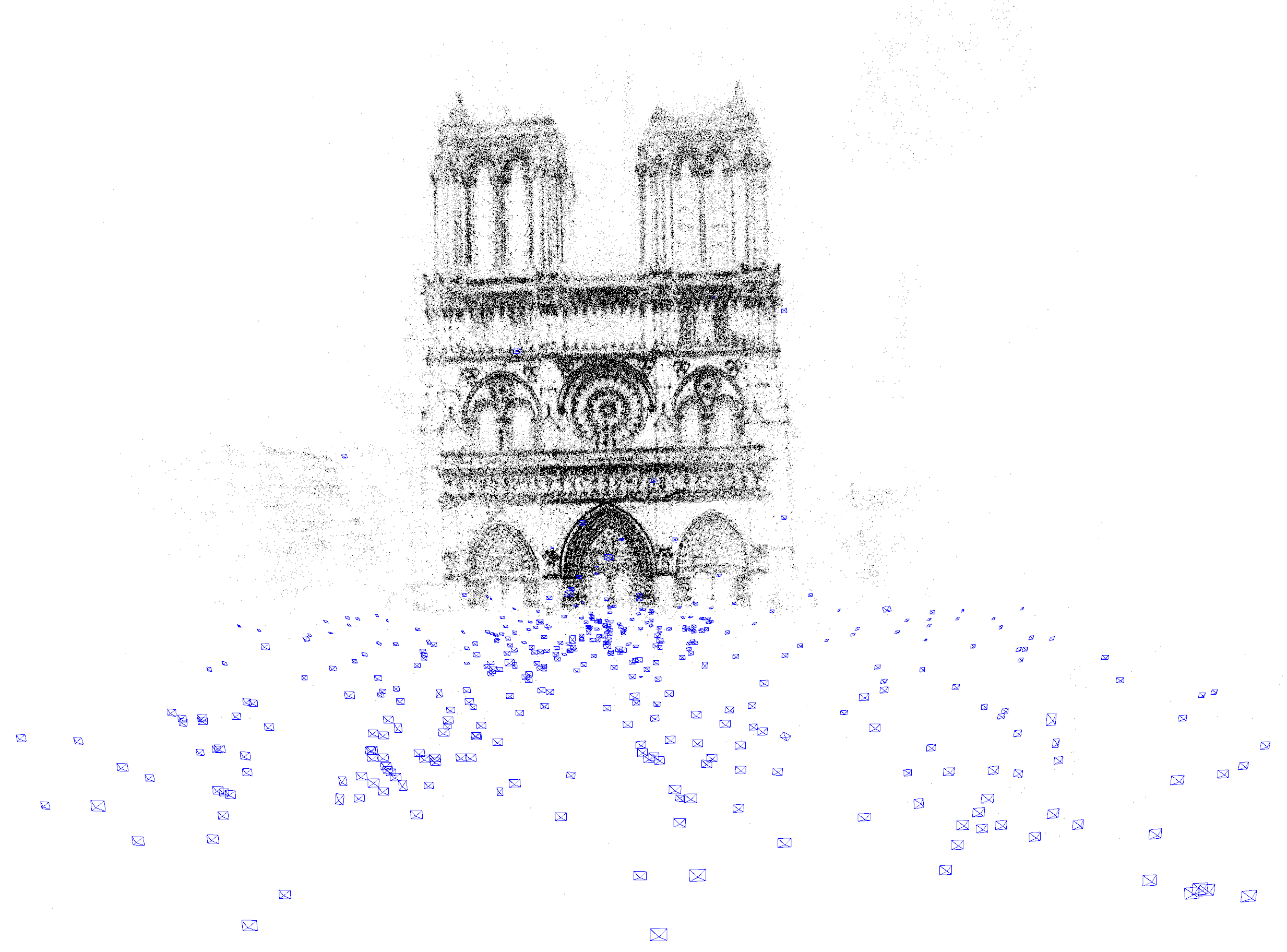}
		\caption{\dataset{Notre Dame}}
		\label{fig:1dsfm:notre_dame}
	\end{subfigure}
	~
	\begin{subfigure}[t]{0.45\linewidth}
		\includegraphics[width=\textwidth]{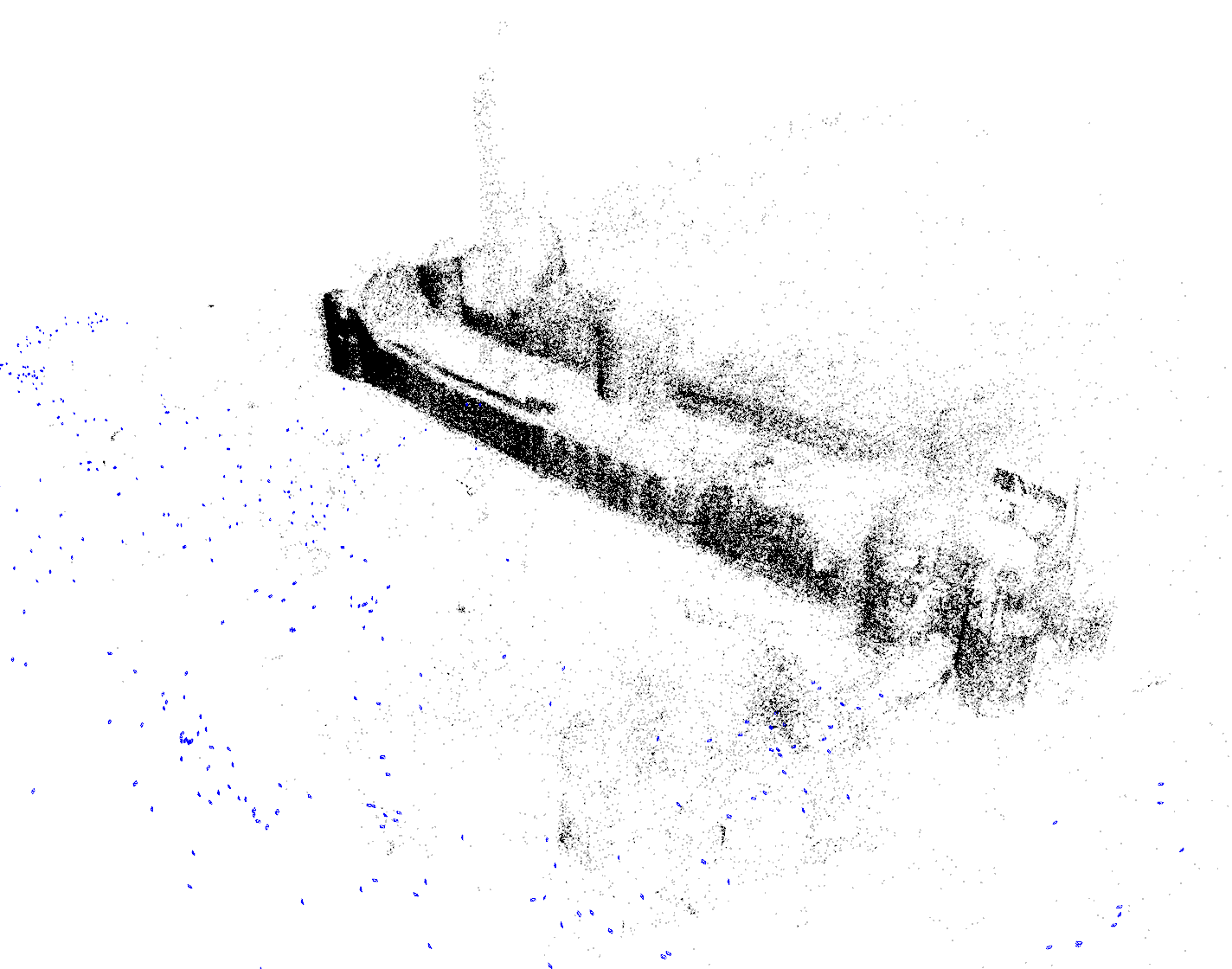}
		\caption{\dataset{Tower of London}}
		\label{fig:1dsfm:tower_of_london}
	\end{subfigure}
	\\
	\begin{subfigure}[t]{0.45\linewidth}
		\includegraphics[width=\textwidth]{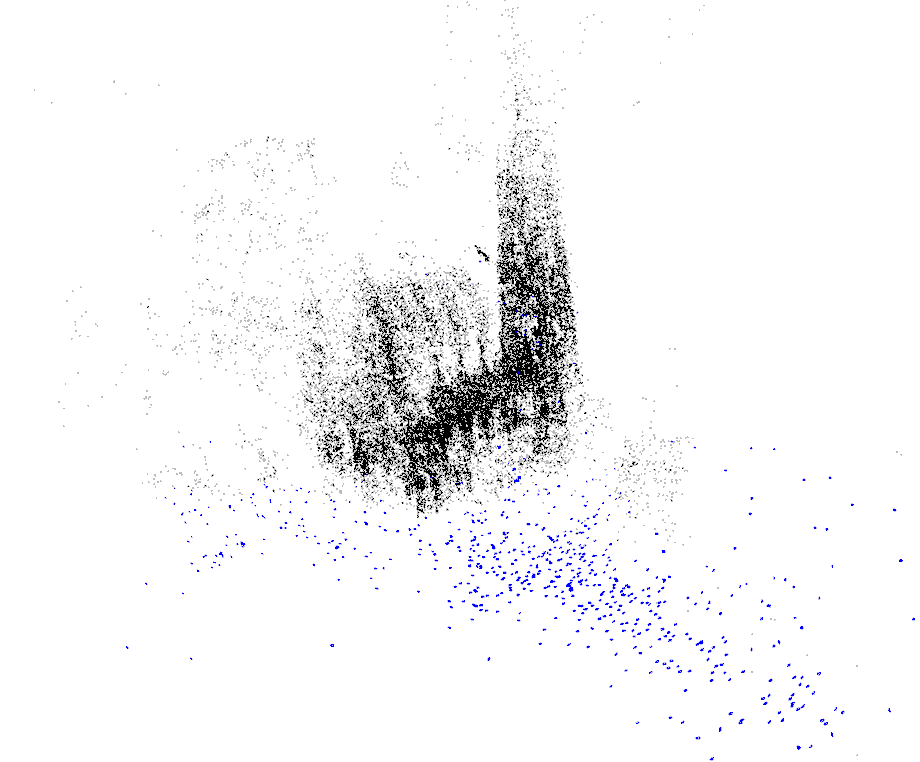}
		\caption{\dataset{Vienna Cathedral}}
		\label{fig:1dsfm:vienna_cathedral}
	\end{subfigure}
	~
	\begin{subfigure}[t]{0.45\linewidth}
		\includegraphics[width=\textwidth,trim=100 100 100 0,clip]{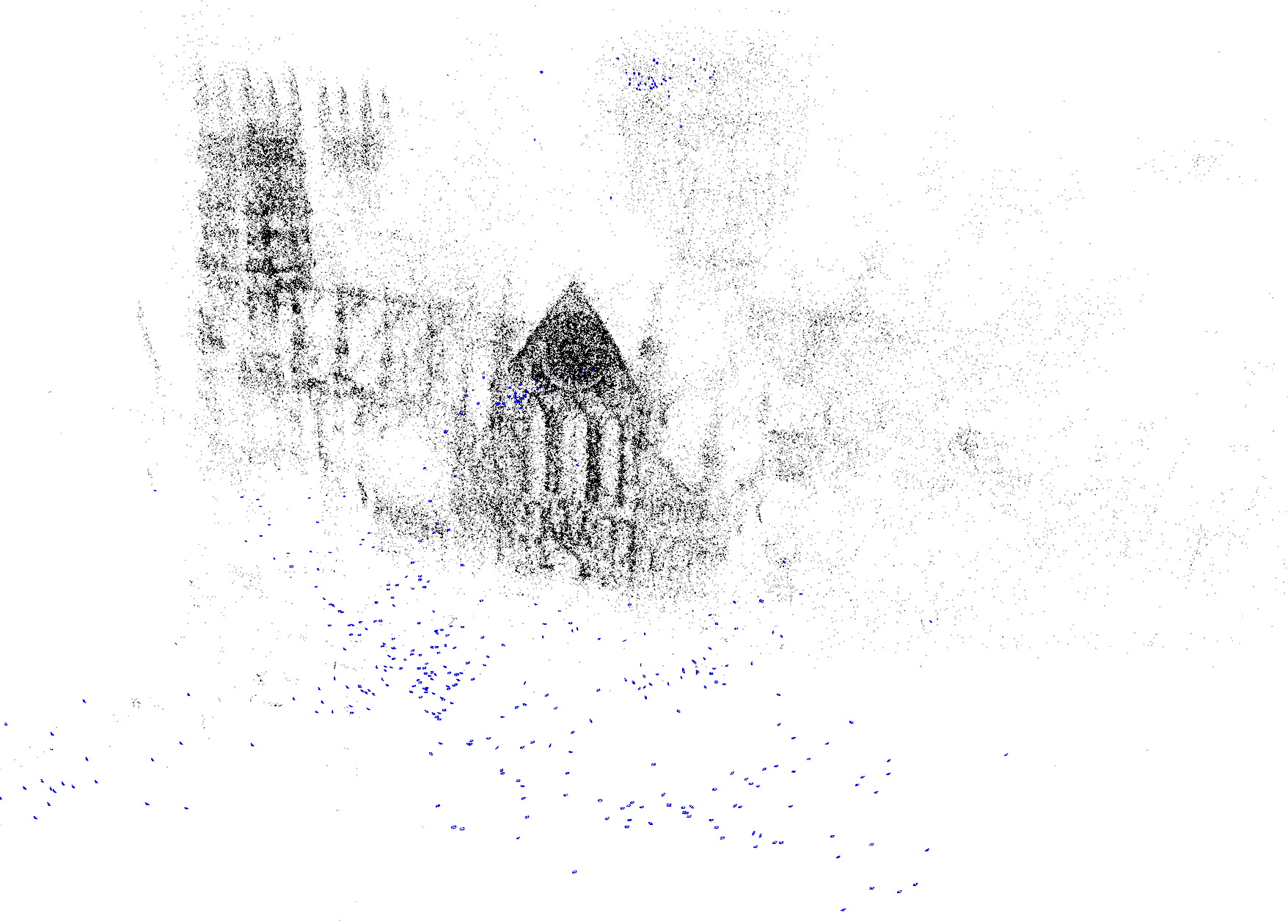}
		\caption{\dataset{Yorkminster}}
		\label{fig:1dsfm:yorkminster}
	\end{subfigure}
	\caption{
		Visualization of BA problems in collaborative SfM scenarios \cite{Wilson141dsfm}, 
		generated using \software{Theia}~\cite{TheiaSfM}. 
		Map points and camera poses are shown in black and blue, respectively.
		Each dataset is partitioned into 50 agents to simulate a collaborative BA problem.
	}
	\label{fig:1dsfm}
\end{figure}

\end{appendices}

\end{document}